%% file: main_iclr2027_conference.tex
\documentclass{article} 
\usepackage{iclr2027_conference,times}

\input{math_commands.tex}

\usepackage{hyperref}
\usepackage{url}
\usepackage{booktabs}
\usepackage{graphicx}
\usepackage{subcaption}
\usepackage{algorithm}
\usepackage{algpseudocode}
\usepackage{wrapfig}
\usepackage{float}
\usepackage[table]{xcolor}
\usepackage{multirow}
\newcommand{\dd}{\mathrm{d}}

\newcommand{\norm}[1]{\lVert #1\rVert}

\usepackage{amsthm}

\theoremstyle{plain}
\newtheorem{theorem}{Theorem}[section]

\theoremstyle{definition}

\theoremstyle{remark}
\newtheorem{remark}[theorem]{Remark}

\usepackage[most]{tcolorbox}
\tcbuselibrary{theorems}

\newtcbtheorem[number within=section]
  {boxedtheorem}{Theorem}%
  {
    enhanced,
    breakable,
    colback=white,
    colframe=black,
    boxrule=0.8pt,
    arc=1mm,
    fonttitle=\bfseries
  }{bthm}

\newtcbtheorem[number within=section]
  {boxedproposition}{Proposition}%
  {
    enhanced,
    breakable,
    colback=white,
    colframe=black,
    boxrule=0.8pt,
    arc=1mm,
    fonttitle=\bfseries
  }{bprop}

\newtcbtheorem[number within=section]
  {boxeddefinition}{Definition}%
  {
    enhanced,
    breakable,
    colback=white,
    colframe=black,
    boxrule=0.8pt,
    arc=1mm,
    fonttitle=\bfseries
  }{bdef}

\title{Geometry-Aware Time Reparameterization for Flow-Map Distillation}   

\author{
F\'elix Dedek \& Makoto Yamada\\
Machine Learning and Data Science Unit\\
Okinawa Institute of Science and Technology Graduate University\\
Okinawa, Japan
}

\iclrfinalcopy 
\begin{document}

\maketitle
\fancyhead{}
\renewcommand{\headrulewidth}{0pt}

\input{abstract}

\input{introduction}

\input{relatedwork}

\input{Background}

\input{method}

\input{Experiments}

\input{conclusion}

\bibliography{iclr2027_conference}
\bibliographystyle{iclr2027_conference}

\appendix
\input{appendix_proofs}

\input{Additional_experimental_details}

\end{document}

%% file: math_commands.tex
\usepackage{amsmath,amsfonts,bm}

\def\eqref#1{equation~\ref{#1}}

\def\1{\bm{1}}

\def\rt{{\textnormal{t}}}

\def\rva{{\mathbf{a}}}
\def\rvb{{\mathbf{b}}}

\def\rvx{{\mathbf{x}}}
\def\rvy{{\mathbf{y}}}

\def\vtheta{{\bm{\theta}}}

\def\vx{{\bm{x}}}
\def\vy{{\bm{y}}}

\def\mI{{\bm{I}}}

\DeclareMathAlphabet{\mathsfit}{\encodingdefault}{\sfdefault}{m}{sl}
\SetMathAlphabet{\mathsfit}{bold}{\encodingdefault}{\sfdefault}{bx}{n}

\newcommand{\E}{\mathbb{E}}



%% file: abstract.tex
\begin{abstract}
Flow-map distillation enables one- and few-step generation by learning
finite-time transitions of a pretrained generative ODE.
We investigate whether changing the teacher's time parameterization
can make these transitions easier to learn.
Motivated by the hypothesis that trajectory segments with large
normal acceleration are harder to distill, we propose a geometry-aware
time reparameterization that allocates more student time to these
regions while preserving the teacher's geometric paths and terminal
distribution.
We derive a shared clock that equalizes a population
normal-acceleration statistic under suitable assumptions, and
construct a practical approximation from robust, regularized
estimates across teacher trajectories.
We incorporate this clock into Lagrangian flow-map distillation,
using the transformed time coordinate to condition the student.
The clock is estimated once before distillation and requires
neither teacher retraining nor additional student parameters or
inference-time network evaluations.
Experiments on synthetic data, CIFAR-10, and CelebA-64 show improved
sample quality over identity-time distillation at matched inference
budgets, including improvements in one- and two-step image generation.
The gains in one-step generation, where no intermediate sampling
times can be adjusted, highlight the benefits of time
reparameterization during distillation.
\end{abstract}

%% file: introduction.tex
\section{Introduction}
Diffusion and score-based generative models have emerged as a leading paradigm for learning complex data distributions
\citep{sohldickstein2015deep,ho2020denoising,song2021score},
enabling high-fidelity image synthesis and flexible conditioning at scale
\citep{dhariwal2021diffusion,rombach2022highresolution}.
These models learn to reverse a gradual noising process, with generation described by reverse-time stochastic dynamics or an associated probability-flow ordinary differential equation (ODE).
Their success, however, comes with substantial inference cost: producing a sample typically requires many sequential neural-network evaluations.
Reducing this cost while preserving sample quality remains a central challenge, motivating extensive work on deterministic sampling and specialized numerical solvers
\citep{song2021denoising,lu2022dpmsolver}.

Flow matching (FM) \citep{lipman2023flowmatching} provides a flexible framework for learning continuous-time transport through a simulation-free regression objective.
Together with closely related rectified-flow and stochastic-interpolant formulations
\citep{liu2023rectifiedflow,albergo2023stochasticinterpolants},
it learns a time-dependent velocity field that transports a simple source distribution to the data distribution, using analytically accessible conditional paths to construct training targets.
This viewpoint accommodates diffusion probability paths as well as simpler alternatives, such as linear interpolation between noise and data, offering greater flexibility in the design of the generative dynamics.
Importantly, however, straight conditional interpolation paths do not generally imply straight trajectories of the resulting marginal velocity field \citep{frans2025shortcut}.
Sampling therefore still requires numerical ODE integration, and using too few steps can introduce substantial discretization error.

One- and few-step generative models address this limitation by learning finite-time transport directly, either through distillation from a pretrained model \citep{salimans2022progressive} or through direct training from data.
Consistency models \citep{song2023consistency} learn to map points along a probability-flow trajectory to a common endpoint, while consistency trajectory models \citep{kim2024consistencytrajectory} and flow map matching (FMM) \citep{boffi2025flowmap} learn two-time maps connecting intermediate states along the dynamics.
Related formulations include Shortcut Models \citep{frans2025shortcut}, which learn step-size-conditioned updates by enforcing consistency between one large step and two smaller steps, and MeanFlow \citep{geng2025meanflow}, which learns average velocities through a differential identity relating them to instantaneous velocities.
By learning finite-time updates, these approaches replace many evaluations of an instantaneous velocity field with one or a few evaluations of a learned map.
Their effectiveness consequently depends on how accurately this map can be learned.

We investigate whether changing a pretrained teacher's time parameterization can make its flow map easier to learn. Motivated by the hypothesis that segments with large normal acceleration are harder to distill, we construct a shared clock that allocates more student time to these regions while preserving the teacher's geometric paths and terminal distribution. The clock is estimated once from robust, regularized normal-acceleration statistics. We then train the student against the reparameterized velocity $\widetilde b_\tau(x)=\phi'(\tau)b_{\phi(\tau)}(x)$ using Lagrangian flow-map distillation. This requires neither teacher retraining nor changes to the student architecture or additional inference-time network evaluations.

Our main contributions are: 
\begin{itemize} 
\item \textbf{A normal-acceleration-based clock.} We derive a shared time transformation that equalizes a population normal-acceleration statistic under suitable assumptions, and estimate it robustly from teacher trajectories.

\item \textbf{Geometry-aware flow-map distillation.} We incorporate this time coordinate into Lagrangian flow-map distillation, training a single student directly against the reparameterized teacher dynamics. The time map is estimated once before distillation and requires neither teacher retraining nor changes to the student architecture, with no additional inference-time network evaluations. 
\item \textbf{Improved few-step generation.} Through synthetic and image-generation experiments, we demonstrate improved sample quality over identity-time distillation at matched inference budgets. In particular, we observe meaningful improvements to one-step generation, where no intermediate sampling times can be adjusted, highlighting the role of time reparameterization in the distillation problem itself. \end{itemize}

%% file: relatedwork.tex
\section{Related work}
\vspace{-.1in}
\paragraph{Flow-map learning.}

Progressive distillation \citep{salimans2022progressive} compresses
successive teacher updates, while consistency models and consistency
trajectory models learn endpoint predictions or transitions between
intermediate noise levels
\citep{song2023consistency,kim2024consistencytrajectory}.
Flow map matching \citep{boffi2025flowmap} provides a mathematical
framework for these constructions, with objectives for both teacher
distillation and direct training.
Subsequent work develops self-distillation formulations
\citep{boffi2025buildconsistencymodellearning} and practical continuous-time
distillation objectives \citep{sabour2025alignyourflow}.
Shortcut Models and MeanFlow offer complementary formulations based on
compositional consistency and average velocities
\citep{frans2025shortcut,geng2025meanflow}.
Recent extensions include parallel prediction of interval velocities
\citep{shaul2026paralleldecoding} and distribution-based distillation
using average-velocity information \citep{zhao2026meanflowdistillation}.
Our work builds on Lagrangian flow-map distillation and investigates
the time coordinate in which the teacher's finite-time transport is
learned.

\vspace{-.1in}
\paragraph{Trajectory geometry and coupling.}
Several approaches improve generative dynamics by simplifying their
trajectories.
Rectified flow and reflow encourage straighter transport
\citep{liu2023rectifiedflow}, while minibatch optimal transport modifies
source--target couplings to obtain simpler dynamics
\citep{tong2024improving}.
\citet{lee2023minimizing} learn couplings that reduce trajectory
curvature and also investigate the benefits of straighter teachers
for distillation.
Isokinetic Flow Matching introduces a lookahead velocity-consistency
regularizer motivated by reducing material acceleration
\citep{khan2026isokineticflowmatchingpathwise}.
These methods modify the learned velocity field or its training
coupling.
Our transformation instead retains the geometric trajectories of a
pretrained teacher and adjusts their traversal speed before
flow-map distillation.

\vspace{-.1in}
\paragraph{Sampling schedules and trajectory diagnostics.}
Sampling efficiency also depends on the placement of numerical
integration steps.
Align Your Steps optimizes schedules through a stochastic-calculus-based
discrepancy criterion
\citep{sabour2024alignstepsoptimizingsampling}, while Geometry-Inspired
Time Scheduling uses trajectory information and dynamic programming
to select integration times
\citep{chen2024trajectoryregularityodebaseddiffusion}.
Entropic Time Schedulers allocate sampling points according to
conditional-entropy progress \citep{stancevic2025entropic}.
Particularly close to our construction, Sharpen Your Flow estimates
the full material-acceleration magnitude of a pretrained flow and
derives a square-root sampling density implemented through
inverse-CDF scheduling
\citep{gupta2026sharpenflowsharpnessawaresampling}.
Our method uses the normal component of acceleration and applies
the resulting clock to the teacher dynamics during student training.
This distinction is particularly relevant to one-step flow-map
generation, where no intermediate inference times can be adjusted.

\vspace{-.1in}
\paragraph{Adaptive allocation of training effort.}
Temporal difficulty has also motivated adaptive training procedures.
Curvature-Adaptive Consistency Flow Matching uses consistency-error
feedback and reinforcement learning to allocate training across
temporal stages \citep{tian2026curvatureadaptive}.
Its boundary-focused allocation provides qualitative evidence of
nonuniform distillation difficulty, although its diagnostic differs
from normal acceleration.
Complexity-Balanced Diffusion Splitting uses acceleration or spatial
regularity estimates to partition time among specialized networks
\citep{issachar2026complexitybalanced}.
Our approach trains a single student using a fixed time transformation
estimated from the pretrained teacher.

\vspace{-.1in}
\paragraph{Time reparameterization and learnability.}
Time parameterization can affect learning as well as numerical
integration.
\citet{tsimpos2025optimal} optimize the schedule of a fixed transport
interpolation to control the spatial Lipschitz constant of its
velocity field.
\citet{chen2025lipschitz} develop Lipschitz-guided interpolation
schedules and transfer formulas for applying them to pretrained
models.
Other approaches prescribe conditional velocity profiles
\citep{bondar2026velocityscheduled} or construct an inverse-CDF
interpolation schedule from regression losses measured during a
pilot training run \citep{tania2026difficultycalibrated}.
Time transformations also appear in Flow Map Language Models,
where a decoding-error-based clock guides temporal sampling and
flow-map learning \citep{lee2026flowmaplanguage}.
These works establish the relevance of temporal coordinates to
generative learning.
We investigate a shared clock derived from the normal acceleration
of a frozen teacher, and train the student directly against the
reparameterized teacher velocity in the Lagrangian distillation
objective.

%% file: Background.tex
\section{Background}
In this section, we first formulate the flow matching problem and then introduce flow-map distillation methods. 

\subsection{Flow Matching}

Let $(\check{\rvx}_0,\check{\rvx}_1)\sim\pi\in\Pi(\rho_0,\rho_1)$
denote a pair of samples from a coupling of the source and target
distributions.
Following the stochastic-interpolant formulation
\citep{albergo2023stochasticinterpolants}, we define intermediate
states through a time-differentiable interpolant:
\[
    I_t(\check{\rvx}_0,\check{\rvx}_1)
    :=
    \alpha_t\check{\rvx}_0+\beta_t\check{\rvx}_1,
\]
with boundary conditions
\[
    I_0(\check{\rvx}_0,\check{\rvx}_1)=\check{\rvx}_0,
    \qquad
    I_1(\check{\rvx}_0,\check{\rvx}_1)=\check{\rvx}_1.
\]
For brevity, we write
$I_t:=I_t(\check{\rvx}_0,\check{\rvx}_1)$,
where $(\check{\rvx}_0,\check{\rvx}_1)\sim\pi$.

Flow matching \citep{lipman2023flowmatching} learns a time-dependent
velocity field by regressing the interpolant velocity:
\[
    \mathcal L_{\mathrm{FM}}(\vtheta)
    =
    \E_{t,(\check{\rvx}_0,\check{\rvx}_1)\sim\pi}
    \left[
        \norm{
            b^{\vtheta}_t(I_t)
            -\dot I_t
        }^2
    \right],
\]
where $t\sim\mathcal U([0,1])$ independently of the endpoint pair
and $\dot I_t=\frac{\dd}{\dd t}I_t$.
This loss is minimized by
\[
    b_t^\star(\vx)
    =
    \E\left[
        \dot I_t
        \mid I_t=\vx
    \right].
\]
Under suitable regularity assumptions, the ODE driven by $b_t^\star$ and initialized from $\rho_0$ reproduces the interpolant marginals.
In all our experiments, we use linear interpolation,
\[
    I_t(\check{\rvx}_0,\check{\rvx}_1)
    =
    (1-t)\check{\rvx}_0+t\check{\rvx}_1,
    \qquad
    \dot I_t(\check{\rvx}_0,\check{\rvx}_1)
    =
    \check{\rvx}_1-\check{\rvx}_0.
\]

We denote the trained teacher field by $b_t$ and reserve $\rvx_t$
for its ODE trajectories:
\[
    \dot{\rvx}_t=b_t(\rvx_t),
    \qquad \rvx_0\sim\rho_0.
\]
These trajectories generally differ from individual interpolation
paths.
Generation requires numerical ODE integration, typically involving
many network evaluations and motivating few-step samplers and distillation methods.

\subsection{Flow-Map Distillation}

Consistency models \citep{song2023consistency} learn to map states
along a probability-flow trajectory to a common data endpoint.
Flow-map distillation \citep{boffi2025flowmap} learns two-time maps
connecting intermediate states of a pretrained teacher ODE.
Let $X_{s,t}$ denote its solution map from time $s$ to time $t$:
\begin{equation}
    X_{s,t}(\rvx_s)=\rvx_t.
    \label{eq:flow map condition}
\end{equation}

Approximating this map allows a student to replace many evaluations
of the teacher velocity $b_t$ with a single finite-time update.

Following \citet{boffi2025flowmap}, we parameterize the student as
\begin{equation}
\label{eq: flow map representation}
    \widehat X^\vtheta_{s,t}(\vx)
    =
    \vx+(t-s)v_\vtheta(\vx,s,t),
\end{equation}
where $v_\vtheta$ approximates the average velocity over
the interval $[s,t]$.
This parameterization enforces the identity condition
$\widehat X^\vtheta_{s,s}(\vx)=\vx$.

Assuming the teacher ODE has unique solutions, the flow-map relation
\eqref{eq:flow map condition} is equivalently characterized by the
Lagrangian equation
\[
    \partial_t X_{s,t}(\vx)
    =
    b_t\bigl(X_{s,t}(\vx)\bigr),
    \qquad
    X_{s,s}(\vx)=\vx.
\]
Since our parameterization enforces the identity condition,
Lagrangian flow-map distillation (LMD)
\citep{boffi2025flowmap} trains the student by minimizing
the residual of this equation.
We use the following stop-gradient formulation:
\begin{equation}
\label{eq:LMD loss}
    \mathcal L_{\mathrm{LMD}}(\vtheta)
    =
    \E_{s,t,(\check{\rvx}_0,\check{\rvx}_1)\sim\pi}
    \left[
        \left\|
            \partial_t \widehat X^\vtheta_{s,t}(I_s)
            -
            \mathrm{sg}\!\left(
                b_t\!\left(\widehat X^\vtheta_{s,t}(I_s)\right)
            \right)
        \right\|^2
    \right].
\end{equation}
The time-pair sampling distribution is specified in the experimental setup.

Following the stop-gradient treatment of Lagrangian distillation
discussed by \citet{boffi2025buildconsistencymodellearning}, we apply
$\mathrm{sg}(\cdot)$ to the teacher velocity evaluated at the
student output.
This avoids backpropagation through the teacher's spatial Jacobian,
while the teacher parameters remain fixed throughout training.
We also found this treatment important for training stability
in our experiments; further details are provided in the appendix.

\paragraph{Sampling from the flow map}
Given a partition $0=t_0<\cdots<t_K=1$, we generate samples
by composing the learned maps:
\[
    \widehat{\rvx}_{t_{k+1}}
    =
    \widehat X^\vtheta_{t_k,t_{k+1}}
    (\widehat{\rvx}_{t_k}),
    \qquad k=0,\ldots,K-1,
    \qquad
    \widehat{\rvx}_0\sim\rho_0.
\]
The terminal state $\widehat{\rvx}_1$ is the generated sample.
Each update requires one student-network evaluation, allowing
the sampling budget $K$ to be chosen after training.
In particular, one-step generation applies
$\widehat X^\vtheta_{0,1}$ directly to a source sample.


%% file: method.tex
\section{Proposed Method}
We first motivate normal acceleration as a proxy for flow-map learning difficulty. We then construct a time transformation that
redistributes this quantity across time and incorporate the resulting clock into Lagrangian distillation.

\begin{wrapfigure}[25]{r}{0.5\textwidth}
    \centering
    \includegraphics[width=0.5\textwidth]{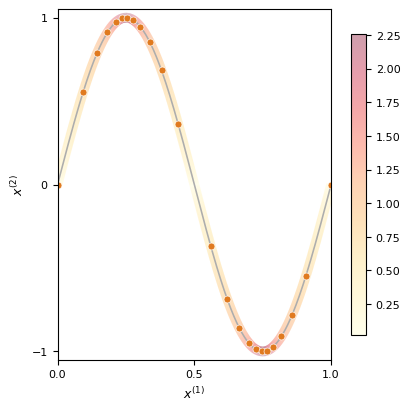}
    \caption{Time warping represented on the 2d trajectory $\vx_t=(t,\sin(2\pi t))$. The orange dots represent the reparametrized trajectory $\vx_{\phi(\tau_K)} $ with uniform time steps $\tau_k=k/(N+1)$. The colorscale represents the density $q=\psi\prime$ of $\phi(\rt)$ with $\rt\sim\mathcal{U}[0,1]$.}
    \label{fig:placeholder}
\end{wrapfigure}

\subsection{Normal Acceleration As A Proxy For Distillation Difficulty}
Our working hypothesis is that trajectory segments with large normal acceleration are harder to represent through finite-time flow maps. To motivate this, consider the exact average velocity along a teacher trajectory: \[ \frac{\rvx_t-\rvx_s}{t-s} = \frac{1}{t-s}\int_s^t b_u(\rvx_u)\,\dd u, \qquad s<t. \] If the velocity is constant on $[s,t]$, this average equals $b_s(\rvx_s)$. More generally, if its direction is fixed, the average remains aligned with that direction, and only its magnitude varies with the interval. When the direction changes, both the magnitude and direction of the average can depend on the two endpoints. This motivates using normal acceleration, the component associated with directional changes, as a diagnostic of temporal learning difficulty. The resulting connection to distillation error is an empirical hypothesis, rather than a consequence of the flow-map identity.

\paragraph{Normal-acceleration diagnostic.}
For the teacher field $b_t$, define the material acceleration
\[
    a_t(\vx)
    :=
    \partial_t b_t(\vx)
    +D_{\vx}b_t(\vx)\,b_t(\vx).
\]
Along a teacher trajectory, $\ddot{\rvx}_t=a_t(\rvx_t)$.
Where $b_t(\vx)\neq0$, let
$\widehat b_t(\vx)=b_t(\vx)/\norm{b_t(\vx)}$ and decompose
\[
    a_t^\parallel(\vx)
    =
    \bigl(\widehat b_t(\vx)^\top a_t(\vx)\bigr)
    \widehat b_t(\vx),
    \qquad
    a_t^\perp(\vx)=a_t(\vx)-a_t^\parallel(\vx).
\]
The tangential component changes speed, while the normal
component changes velocity direction.
We define
\[
    n_t(\vx):=\norm{a_t^\perp(\vx)}.
\]
Geometrically, $n_t$ measures how strongly a trajectory bends at its current traversal speed: it equals the geometric curvature multiplied by the squared speed. Thus, slowing down reduces normal acceleration without altering the path taken by a trajectory. 
To summarize this diagnostic across teacher trajectories, we define the \emph{hardness profile} \[ h_p(t) := \left(\E\left[n_t(\rvx_t)^p\right]\right)^{1/p},
\qquad p>0, \] 
where the expectation is over teacher trajectories initialized
from $\rvx_0\sim\rho_0$.
We use this profile as a proxy for time-dependent distillation difficulty.

\subsection{Geometry-Aware Time Reparameterization} The hardness profile identifies where additional temporal resolution may be useful. We therefore seek a shared change of time that slows traversal through regions of high hardness and accelerates it where hardness is low. Such a transformation preserves the teacher's geometric paths and terminal distribution, while changing its normal acceleration. Our goal is to choose the new time coordinate so that the resulting hardness profile is constant. The following theorem gives an explicit construction.

\begin{theorem}[Equalizing the hardness profile] 
\label{thm: phi construction} 
Assume the teacher trajectories are $C^2$ with nonzero velocity, and $h_p$ is positive and continuously differentiable. Define 
\[ 
\psi(t) := \frac{\int_0^t \sqrt{h_p(u)}\,\dd u} {\int_0^1 \sqrt{h_p(u)}\,\dd u}, \qquad \phi:=\psi^{-1}. 
\] 
Then the reparameterized trajectories $\rvy_\tau:=\rvx_{\phi(\tau)}$ have constant hardness: 
\[ 
\widetilde h_p(\tau)
:=
\left(\E[\widetilde n_\tau(\rvy_\tau)^p]\right)^{1/p} = \left(\int_0^1 \sqrt{h_p(u)}\,\dd u\right)^2, \qquad \tau\in[0,1], 
\] 
where $\widetilde n_\tau$ denotes the normal-acceleration magnitude of the reparameterized dynamics.
\end{theorem}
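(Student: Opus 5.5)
The plan is a direct chain-rule computation: find how normal acceleration transforms under a time change, then choose $\phi$ so the transformed population statistic is constant. First I check that $\psi$ is a valid clock. Since $h_p$ is positive and continuous, $\sqrt{h_p}$ is positive and continuous on $[0,1]$. Hence $\psi$ is $C^1$, strictly increasing, and satisfies $\psi(0)=0$ and $\psi(1)=1$, with $\psi'(t)=\sqrt{h_p(t)}/Z$, where $Z:=\int_0^1\sqrt{h_p(u)}\,\dd u$. By the inverse function theorem, $\phi=\psi^{-1}$ is a $C^1$ bijection of $[0,1]$ with
\[
\phi'(\tau)=\frac{1}{\psi'(\phi(\tau))}=\frac{Z}{\sqrt{h_p(\phi(\tau))}}>0 .
\]
Because $h_p$ is $C^1$, $\phi'$ is $C^1$ too, so $\phi$ is $C^2$ and $\rvy_\tau=\rvx_{\phi(\tau)}$ is $C^2$. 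The reparameterized dynamics are $\dot\rvy_\tau=\widetilde b_\tau(\rvy_\tau)$ with $\widetilde b_\tau(\vx)=\phi'(\tau)b_{\phi(\tau)}(\vx)$. This field is nonzero wherever $b$ is, so the normal decomposition is well defined.

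Second, I compute the pathwise transformation law. Differentiating twice gives
\[
\ddot\rvy_\tau=\phi''(\tau)\,b_{\phi(\tau)}(\rvx_{\phi(\tau)})+\phi'(\tau)^2\,a_{\phi(\tau)}(\rvx_{\phi(\tau)}).
\]
The first term is parallel to $\widetilde b_\tau(\rvy_\tau)$, and the unit tangent is unchanged because $\phi'>0$. Projecting onto the orthogonal complement of the tangent therefore removes the first term. Since $\widetilde a_\tau(\rvy_\tau)=\ddot\rvy_\tau$ along trajectories, this gives
\[
\widetilde n_\tau(\rvy_\tau)=\phi'(\tau)^2\,n_{\phi(\tau)}(\rvx_{\phi(\tau)}).
\]
This is the "curvature times squared speed" scaling noted before the theorem.

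Third, I take the population statistic. The clock $\phi$ is deterministic and shared across trajectories, and $\rvy$ is initialized from the same $\rho_0$. So $\phi'(\tau)^2$ factors out of the expectation:
\[
\widetilde h_p(\tau)=\phi'(\tau)^2\,h_p(\phi(\tau))=\frac{Z^2}{h_p(\phi(\tau))}\,h_p(\phi(\tau))=Z^2 .
\]
This is the claimed constant.

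The main obstacle is making the second step rigorous rather than formal. One must confirm that $\widetilde n_\tau(\rvy_\tau)$, defined through the material acceleration of the field $\widetilde b$, coincides with the normal component of $\ddot\rvy_\tau$. This needs the identity $\ddot\rvx_t=a_t(\rvx_t)$ for the teacher and its analogue for $\widetilde b$, which holds since $\partial_\tau\widetilde b+D\widetilde b\,\widetilde b$ expands into exactly the two terms above. It also needs the regularity of $\phi$ established in the first step, for which continuous differentiability of $h_p$ is used. Separately, the expectation in the third step requires $\E[n_t^p]<\infty$, which is implicit in the assumption that $h_p$ is finite and positive.
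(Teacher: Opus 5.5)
Your proposal is correct and follows the paper's proof essentially step for step. You use the chain-rule identity $\ddot{\rvy}_\tau=\phi''(\tau)\dot{\rvx}_{\phi(\tau)}+\phi'(\tau)^2\ddot{\rvx}_{\phi(\tau)}$, drop the tangential $\phi''$ term to get $\widetilde n_\tau=\phi'(\tau)^2 n_{\phi(\tau)}$, pull the deterministic factor out of the $p$-th moment, and substitute $\phi'=Z/\sqrt{h_p\circ\phi}$ (your explicit check that $\partial_\tau\widetilde b_\tau+D\widetilde b_\tau\,\widetilde b_\tau=\phi''b_{\phi(\tau)}+\phi'^2a_{\phi(\tau)}$ is a small verification the paper leaves implicit).
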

\begin{proof}
    See Appendix~\ref{prf:main thm}
\end{proof}

\begin{remark}[Tempering the time transformation]
The strength of the transformation can be controlled by choosing
$\psi'(t)\propto(h_p(t)+\xi)^{\delta/2}$, where $\delta\geq0$
and $\xi\geq0$ is an optional numerical floor.
The choice $\delta=0$ recovers the identity clock,
$0<\delta<1$ softens the transformation, and $\delta>1$
increases its concentration on high-hardness regions.
The theorem corresponds to $\delta=1$ and $\xi=0$.
\end{remark}

\paragraph{Practical construction.}
We estimate the time map once before distillation using $M$
teacher trajectories evaluated on a fine grid
$t_k=k/N$, $k=0,\ldots,N$, with $N\geq100$.
We estimate material acceleration by finite differences of
the teacher velocities along each trajectory, then project
onto the normal direction to obtain $n_{k,m}$.
At each time, we discard the largest $5\%$ of these values
before computing the hardness profile.
This limits the influence of rare sharp turns, which can
produce particularly large normal accelerations (phenomenon which we mainly observed in 2d experiments).
We integrate the resulting clock density numerically and fit a monotonicity-preserving PCHIP interpolant to the inverse pairs $(\tau_k,t_k)$ \citep{doi:10.1137/0905021}. The resulting $C^1$ map has an analytically evaluable derivative.
A positive floor $\xi$ ensures a strictly increasing cumulative
clock; in our experiments, the estimated hardness already
remained bounded away from zero. Trimming, tempering, and numerical approximation mean that the practical clock need not exactly equalize the population profile.
Algorithm~\ref{alg:clock-construction} summarizes the procedure.

\begin{algorithm}[!htbp]
\caption{Offline construction of the shared time map}
\label{alg:clock-construction}
\begin{algorithmic}[1]
\Require Teacher $b_t$, source $\rho_0$, trajectory count $M$,
grid size $N$, aggregation $p>0$, tempering $\delta\geq0$,
floor $\xi\geq0$
\State Generate $M$ teacher trajectories-velocity pairs $(\rvx_{k,m},\rvb_{k,m})$ from
$\rvx_{0,m}\sim\rho_0$ on $t_k=k/N$, $k=0,\ldots,N$, $m=1,\dots,M$
\State 
$\rva_{k,m}\gets\operatorname{FD}_t[\rvb_{\cdot,m}]_k$
\State $\hat{\rvb}_{k,m}\gets
\rvb_{k,m}/\norm{\rvb_{k,m}}$;
$n_{k,m}\gets
\norm{\rva_{k,m}
-(\hat{\rvb}_{k,m}^{\top}\rva_{k,m})\hat{\rvb}_{k,m}}$
\For{$k=0,\ldots,N$}
    \State $\mathcal I_k\gets$ indices of the smallest
    $\lceil0.95M\rceil$ values of $\{n_{k,m}\}_{m=1}^M$
    \State $\widehat h_{p,k}\gets
    \left(
        \frac{1}{|\mathcal I_k|}
        \sum_{m\in\mathcal I_k}n_{k,m}^{p}
    \right)^{1/p}$;
    $w_k\gets(\widehat h_{p,k}+\xi)^{\delta/2}$
\EndFor
\State $S_0\gets0$;
$S_k\gets S_{k-1}
+\frac{t_k-t_{k-1}}{2}(w_{k-1}+w_k)$,
$k=1,\ldots,N$
\State $\tau_k\gets S_k/S_N$, $k=0,\ldots,N$
\State $\phi\gets
\operatorname{PCHIP}\bigl(\{(\tau_k,t_k)\}_{k=0}^N\bigr)$
\State \Return $\phi$ and its derivative $\phi'$
\end{algorithmic}
\end{algorithm}

\paragraph{Flow-map distillation in warped time}
Given the clock $\phi=\psi^{-1}$ (Theorem \ref{thm: phi construction}), we train a student to approximate
the reparameterized flow map
\[
    Y_{\alpha,\beta}
    :=
    X_{\phi(\alpha),\phi(\beta)},
    \qquad 0\leq\alpha\leq\beta\leq1.
\]
The trajectories $\rvy_\tau=\rvx_{\phi(\tau)}$ satisfy
\[
    \frac{\dd\rvy_\tau}{\dd\tau}
    =
    \widetilde b_\tau(\rvy_\tau),
    \qquad
    \widetilde b_\tau(\vx)
    :=
    \phi'(\tau)b_{\phi(\tau)}(\vx).
\]

For image experiments, we condition the student on the reparameterized times while retaining the physical-time interval as its displacement scale:
\begin{equation}
\label{eq:warped-map-parameterization}
    \widehat Y^\vtheta_{\alpha,\beta}(\vx)
    =
    \vx+
    \bigl(\phi(\beta)-\phi(\alpha)\bigr)
    v_\vtheta(\vx,\alpha,\beta).
\end{equation}

This choice separates temporal allocation from velocity scaling.
Empirically, the original teacher's velocity norm is approximately
constant over most of the time interval, whereas reparameterization
introduces substantial variation through the factor $\phi'$
(see Figure~\ref{fig:celeb metrics}).
The parameterization also enforces
$\widehat Y^\vtheta_{\alpha,\alpha}(\vx)=\vx$. For the toy experiments, we instead use displacement scale $\beta-\alpha$; the warped-time distillation objective below applies to either parameterization. Both scalings coincide under the identity clock.

We apply the Lagrangian distillation objective to the
reparameterized teacher:
\begin{equation}
\label{eq:warped-lmd}
    \mathcal L_{\mathrm{LMD}}^\phi(\vtheta)
    =
    \E_{\alpha,\beta,(\check{\rvx}_0,\check{\rvx}_1)\sim\pi}
    \left[
        \left\|
            \partial_\beta
            \widehat Y^\vtheta_{\alpha,\beta}(I_{\phi(\alpha)})
            -
            \mathrm{sg}\!\left(
                \widetilde b_\beta\!\left(
                    \widehat Y^\vtheta_{\alpha,\beta}
                    (I_{\phi(\alpha)})
                \right)
            \right)
        \right\|^2
    \right].
\end{equation}

For image experiments, we sample two independent uniform physical times and sort them to obtain $s\leq t$. The student receives $\alpha=\psi(s)$ and $\beta=\psi(t)$,
with input state $I_s=I_{\phi(\alpha)}$.
Thus, both clocks use the same physical-time pair distribution,
while $\phi$ changes the student's temporal coordinates.
Both the teacher and the clock remain fixed during training.
We additionally apply learned loss weighting, as detailed
in Appendix~\ref{app:image-experiments}.

We sample analogously by composing
$\widehat Y^\vtheta_{\tau_k,\tau_{k+1}}$
over a student-time partition $0=\tau_0<\cdots<\tau_K=1$,
starting from $\widehat{\rvy}_0\sim\rho_0$. The overhead that comes from computing the scaling factor $(\phi(\tau_{k+1})-\phi(\tau_{k}))$ is negligible compared with neural-network evaluation, and these differences can be precomputed once for a fixed partition.

%% file: Experiments.tex
\section{Experiments}

We first examine the temporal distribution of normal
acceleration and the effect of our clock on teacher
sampling.
We then evaluate whether training in the resulting
time coordinate improves flow-map distillation on
synthetic and image datasets.

\subsection{Temporal Geometry and Teacher Sampling}
\label{sec:geometry-and-sampling}

\begin{figure}
    \centering
    
    \begin{subfigure}{0.30\linewidth}
    \centering
        \includegraphics[width=\linewidth]{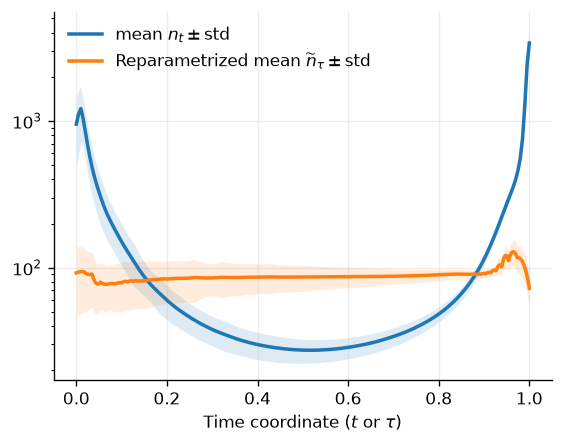}
        
    \end{subfigure}
    \hfill
    \begin{subfigure}{0.30\linewidth}
    \centering
        \includegraphics[width=0.85\linewidth]{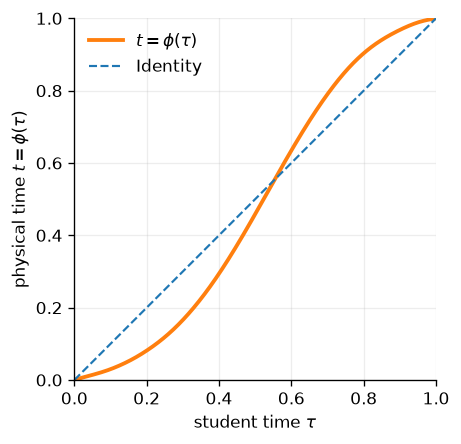}
        
    \end{subfigure}
    \hfill
    \begin{subfigure}{0.30\linewidth}
    \centering
        \includegraphics[width=\linewidth]{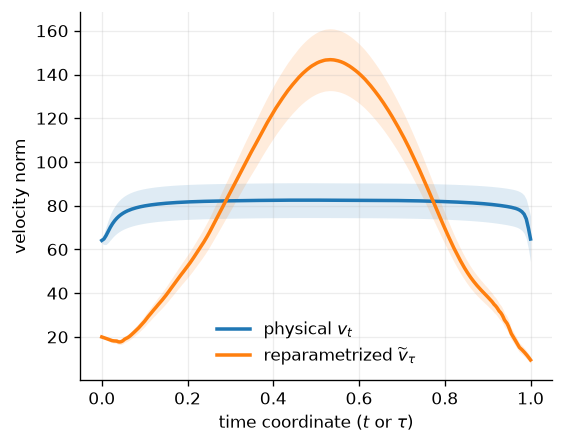}
        
    \end{subfigure}
    \caption 
    {CelebA64 flow matching model (teacher) measurements plotted against their respective coordinates.
    \textbf{Left}: estimated hardness before and after reparametrization.
    \textbf{Middle}: estimated map $\phi$.
    \textbf{Right}: original and reparameterized velocity norms.}
    \label{fig:celeb metrics}
\end{figure}

\paragraph{Normal-acceleration profiles.}

We evaluate normal-acceleration profiles along trajectories of the CIFAR-10 and CelebA64 teachers. Both exhibit a similar U-shaped temporal profile, with larger normal acceleration near the endpoints and smaller values in the middle. Figure~\ref{fig:celeb metrics} shows CelebA64 as a representative example; the CIFAR-10 profiles exhibit the same qualitative behavior, with different magnitudes.
This concentration motivates allocating more student
time to the boundary regions.
It is also qualitatively consistent with the U-shaped
distillation difficulty reported by
\citet{tian2026curvatureadaptive}, who learn an adaptive
time-sampling curriculum using reinforcement learning.
This agreement motivates our diagnostic, although it
does not establish a direct correspondence between
normal acceleration and distillation error.

Figure~\ref{fig:celeb metrics} illustrates the resulting
transformation for CelebA64.
The estimated clock makes the normal-acceleration
profile more uniform across time.
Meanwhile, the original teacher's velocity norm is
approximately constant over most of the interval,
whereas the warped velocity exhibits greater variation.
This observation motivates the physical-time
displacement scaling used for image distillation.

\paragraph{Teacher sampling.}
\begin{wrapfigure}[19]{r}{0.5\textwidth}
    \centering
    \includegraphics[width=0.5\textwidth]{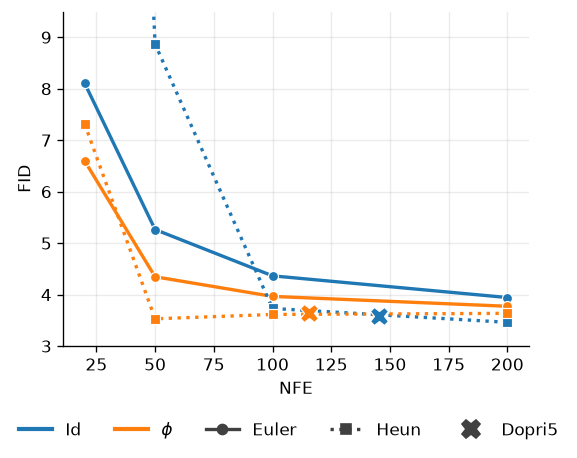}
    \caption{CIFAR-10 Flow Matching model (teacher) evaluated FID using different solvers and number of steps.}
    \label{fig:cifar10-teacher-fid-kid}
\end{wrapfigure}
Before evaluating distillation, we test whether the
estimated clock improves numerical sampling from the
frozen CIFAR-10 teacher.
For fixed-step solvers with $K$ integration steps,
we integrate the original field $b_t$ using either
uniform nodes $t_k=k/K$ or clock-adapted nodes
$t_k=\phi(k/K)$, for $k=0,\ldots,K$.
For the adaptive Dopri5 solver, we instead compare
integration of $b_t$ with integration of the
reparameterized field
$\widetilde b_\tau=\phi'(\tau)b_{\phi(\tau)}$.
We report FID against the number of velocity-network
evaluations (NFE), which dominate sampling cost.

Figure~\ref{fig:cifar10-teacher-fid-kid} shows that
clock-adapted grids improve sample quality at low
evaluation budgets.
With $25$ Heun steps, FID decreases from $8.8$ on
the uniform grid to $3.5$ on the clock-adapted grid,
approaching the teacher's performance at larger budgets.
Naturally, the advantage diminishes as the evaluation budget
increases and as we get closer to the real ODE's solutions.

These results support the usefulness of the estimated
clock for allocating a limited fixed-step integration
budget.
The following experiments test whether training in
the corresponding time coordinate also improves
flow-map distillation.

\subsection{Flow-Map Distillation}
\label{sec:distillation-experiments}

We compare distillation under the identity clock,
$\mathrm{Id}(t)=t$, with distillation under our estimated
clock $\phi$.
For each dataset, we use a fixed flow-matching teacher
and train two students with matched architectures,
training budgets, and optimization settings.
The $\mathrm{Id}$ student learns from $b$, while the
$\phi$ student learns from its reparameterized counterpart
$\widetilde b$.
Before distillation, we estimate $\phi$ from $256$
teacher trajectories generated with an adaptive ODE solver.
The clock remains fixed throughout training.
We use student-time displacement scaling for toy data
and physical-time displacement scaling for images,
as described in the method section.
All comparisons use matched numbers of student-network
evaluations.
Implementation details and additional results are
provided in the appendix.

\subsubsection{Toy Experiments}
\label{sec:toy-experiments}

\begin{figure}[H]
    \centering
    \includegraphics[width=1\linewidth]{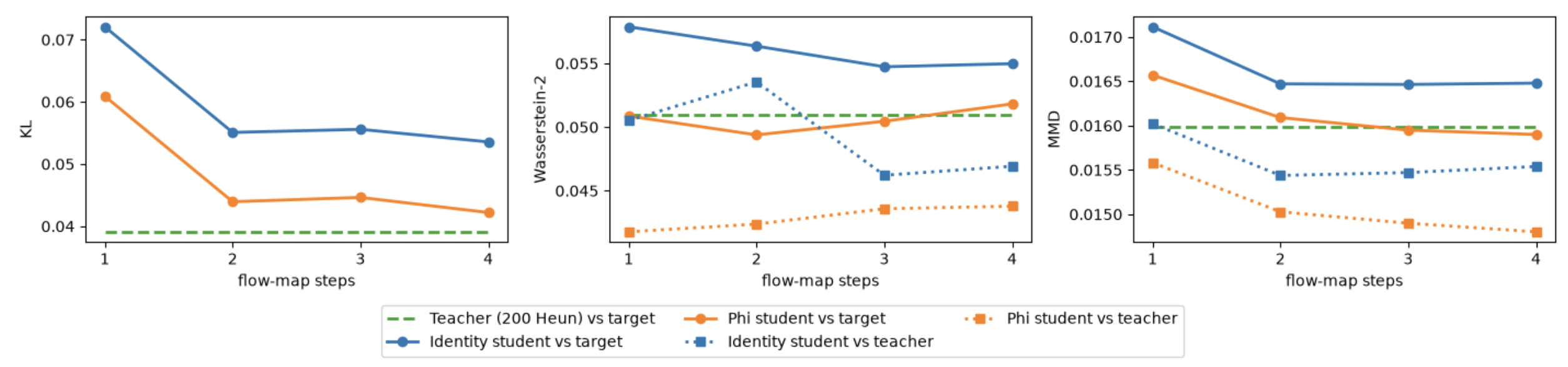}
    \caption{Quantitative results of the flow matching teacher and its distilled flow maps trained on the 2D checkerboard distribution.}
    \label{fig:toy metrics}
\end{figure}

We evaluate on a two-dimensional checkerboard distribution,
whose sharp boundaries and disconnected support provide
a test of sample fidelity.
The teacher and students use four-layer MLPs of width $512$,
trained for $50{,}000$ updates with a batch size of $8192$.
The source distribution is
$\rho_0=\mathcal N(\mathbf 0,\mI)$.

Figure~\ref{fig:toy samples} shows that $\phi$ produces
sharper boundaries and fewer samples outside the target
support than $\mathrm{Id}$.
We quantify sample quality using a histogram-based
forward KL statistic, Gaussian-kernel MMD with bandwidth
$\sigma=0.05$, and empirical Wasserstein-2 distance.
The KL computation exploits the known target density;
its discretization and normalization are specified in
Appendix~\ref{app:toy-metrics}.
MMD and Wasserstein-2 are evaluated against both the
target distribution and teacher-generated samples,
measuring target fidelity and distillation fidelity,
respectively.
Across the evaluated budgets of $1$, $2$, $3$, and $4$
network evaluations, $\phi$ improves all three reported
metrics relative to $\mathrm{Id}$
(Figure~\ref{fig:toy metrics}).

\subsubsection{Image Experiments}
\label{sec:image-experiments}

\begin{figure}
    \centering
    \begin{subfigure}{0.45\linewidth}
    \centering
        \includegraphics[width=\linewidth]{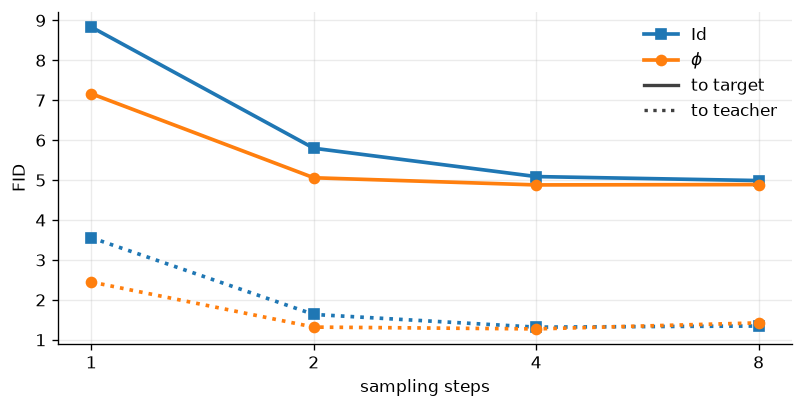}
        
    \end{subfigure}
    \hfill
    \begin{subfigure}{0.45\linewidth}
    \centering
        \includegraphics[width=\linewidth]{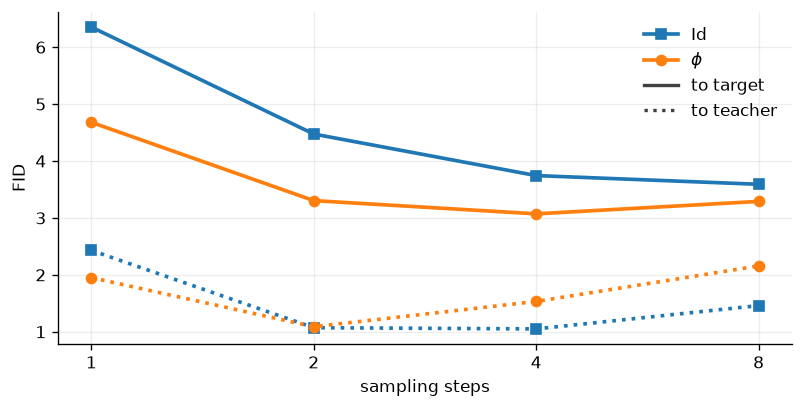}
        
    \end{subfigure}
    \caption{\textbf{Left}: CIFAR-10 Flow Map FID after 300k training steps.
    \textbf{Right}: CelebA64 Flow Map FID after 350k training steps.
    }
    \label{fig:image fid results}
\end{figure}

We evaluate unconditional generation on CIFAR-10
\citep{krizhevsky2009cifar} and CelebA64
\citep{liu2015celeba}.
Both experiments use RAdam \citep{liu2019radam}
and learned loss weighting following EDM2
\citep{karras2024analyzingimprovingtrainingdynamics}
and its adaptation to flow maps
\citep{boffi2025buildconsistencymodellearning}.
We report FID here and provide full FID/KID tables,
including additional checkpoints, in
Appendix~\ref{app:image-experiments}.

\begin{wrapfigure}[19]{r}{0.45\textwidth}
    \centering\includegraphics[width=0.45\textwidth]{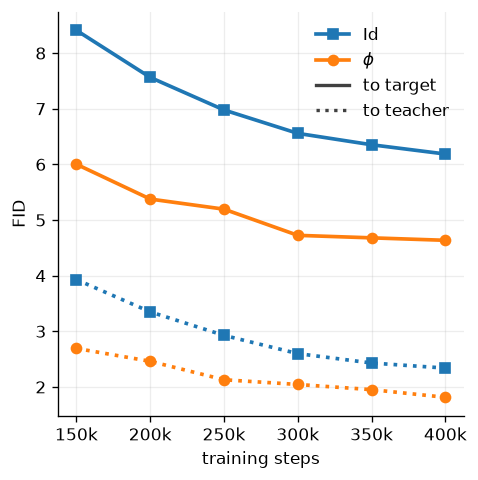}
    \caption{CelebA64 1 step FID vs training budget.}
    \label{fig: celeb FID vs training steps}
\end{wrapfigure}
\paragraph{CIFAR-10.}

We use approximately $35$M-parameter U-Nets and a
pretrained TorchCFM teacher
\citep{tong2023simulation,tong2024improving}.
Students are trained for $400{,}000$ updates with a
global batch size of $196$.

\paragraph{CelebA64.}
We use approximately $124$M-parameter EDM2 U-Nets.
The teacher is trained on the official CelebA training
partition for $350{,}000$ updates, achieving FID $2.36$
with $100$ Heun integration steps.
Students are trained for $400{,}000$ updates with a
global batch size of $256$.

\paragraph{Results.}
Figure~\ref{fig:image fid results} compares
$\mathrm{Id}$ and $\phi$ at matched inference budgets.
On both datasets, $\phi$ improves FID against the target
distribution at every displayed budget.
This does not imply uniformly better agreement with the
teacher distribution: at the displayed checkpoints,
$\mathrm{Id}$ achieves lower teacher-relative FID at eight
steps on CIFAR-10 and at two, four, and eight steps on
CelebA64.
The one-step comparison is particularly informative:
both clocks have identical endpoints and displacement
scales, so improvements reflect the learned map rather than the placement of intermediate sampling times. At $400$K training updates, one-step target FID decreases
from $8.57$ to $6.99$ on CIFAR-10 and from $6.19$ to $4.64$ on CelebA64 (Appendix~\ref{app:image-experiments}).Moreover Figure~\ref{fig: celeb FID vs training steps} shows that the 1 step improvement is consistent across training steps for the CelebA64 model (CIFAR-10 shows a very similar trend).

%% file: conclusion.tex
\section{Conclusion}

We proposed a geometry-aware time reparameterization that
redistributes normal acceleration across time to improve
flow-map distillation.
The clock is estimated once, preserves teacher trajectories,
and requires neither teacher retraining nor additional
inference-time network evaluations.
Experiments show improved few-step generation, in particular
one-step gains that highlight the role of time coordinates
in learning the map itself.
Establishing a theoretical link between normal acceleration
and distillation difficulty remains a direction
for future work.

\section*{AI Use Statement}
We used ChatGPT and AI coding assistants to support
implementation and debugging, literature searches,
manuscript drafting and revision, and figure preparation.
We also used these tools to discuss and refine mathematical
arguments and experimental methodology. The authors take responsibility for the final manuscript, code, references, and reported findings.

The authors reviewed and revised AI-assisted text, checked mathematical arguments and cited sources, and tested AI-assisted code against the stated methods.

%% file: appendix_proofs.tex
\section{Mathematical Details and Proofs}

\subsection{Proof of the main Theorem and construction of $\phi$}

\begin{proof}[Proof of Theorem~\ref{thm: phi construction}]
\label{prf:main thm}
Let $t\in[0,1]$ denote teacher time and $\tau\in[0,1]$
denote student time. Write
\[
    Z:=\int_0^1\sqrt{h_p(u)}\,\dd u.
\]
Since $h_p$ is positive and continuously differentiable,
$0<Z<\infty$ and
\[
    \psi'(t)=\frac{\sqrt{h_p(t)}}{Z}>0.
\]
Thus $\psi$ is a strictly increasing $C^2$ map with
$\psi(0)=0$ and $\psi(1)=1$.
Its inverse $\phi=\psi^{-1}$ is also $C^2$, with
$\phi'>0$, and satisfies
\[
    t=\phi(\tau),
    \qquad
    \tau=\psi(t).
\]

Consider a teacher trajectory $\rvx$ and its reparameterized
version
\[
    \rvy_\tau=\rvx_{\phi(\tau)}.
\]

By the chain rule, \[ \dot{\rvy}_\tau = \phi'(\tau)\dot{\rvx}_{\phi(\tau)}, \qquad \ddot{\rvy}_\tau = \phi''(\tau)\dot{\rvx}_{\phi(\tau)} + \phi'(\tau)^2\ddot{\rvx}_{\phi(\tau)}. \] Since $\phi'(\tau)>0$, the two trajectories have the same unit tangent at corresponding times. The term containing $\phi''$ is tangential, so \[ \widetilde a_\tau^\perp(\rvy_\tau) = \phi'(\tau)^2 a_{\phi(\tau)}^\perp(\rvx_{\phi(\tau)}), \qquad \widetilde n_\tau(\rvy_\tau) = \phi'(\tau)^2 n_{\phi(\tau)}(\rvx_{\phi(\tau)}). \] The same deterministic clock applies to every trajectory, hence \[ \widetilde h_p(\tau) = \phi'(\tau)^2 h_p(\phi(\tau)). \]

Here both expectations are taken over the same initial
distribution of teacher trajectories.

Finally, differentiating $\psi(\phi(\tau))=\tau$ gives
\[
    \psi'(\phi(\tau))\phi'(\tau)=1,
    \qquad
    \phi'(\tau)
    =
    \frac{Z}{\sqrt{h_p(\phi(\tau))}}.
\]
Substituting this expression into the preceding identity,
we obtain
\[
    \widetilde h_p(\tau)
    =
    \frac{Z^2}{h_p(\phi(\tau))}h_p(\phi(\tau))
    =
    \left(\int_0^1\sqrt{h_p(u)}\,\mathrm du\right)^2.
\]
which is independent of $\tau$.
\end{proof}

%% file: Additional_experimental_details.tex
\section{Toy Experiments: Implementation Details}
\label{app:toy-experiments}

\begin{figure}[!htb]
    \centering
    \includegraphics[width=1\linewidth]{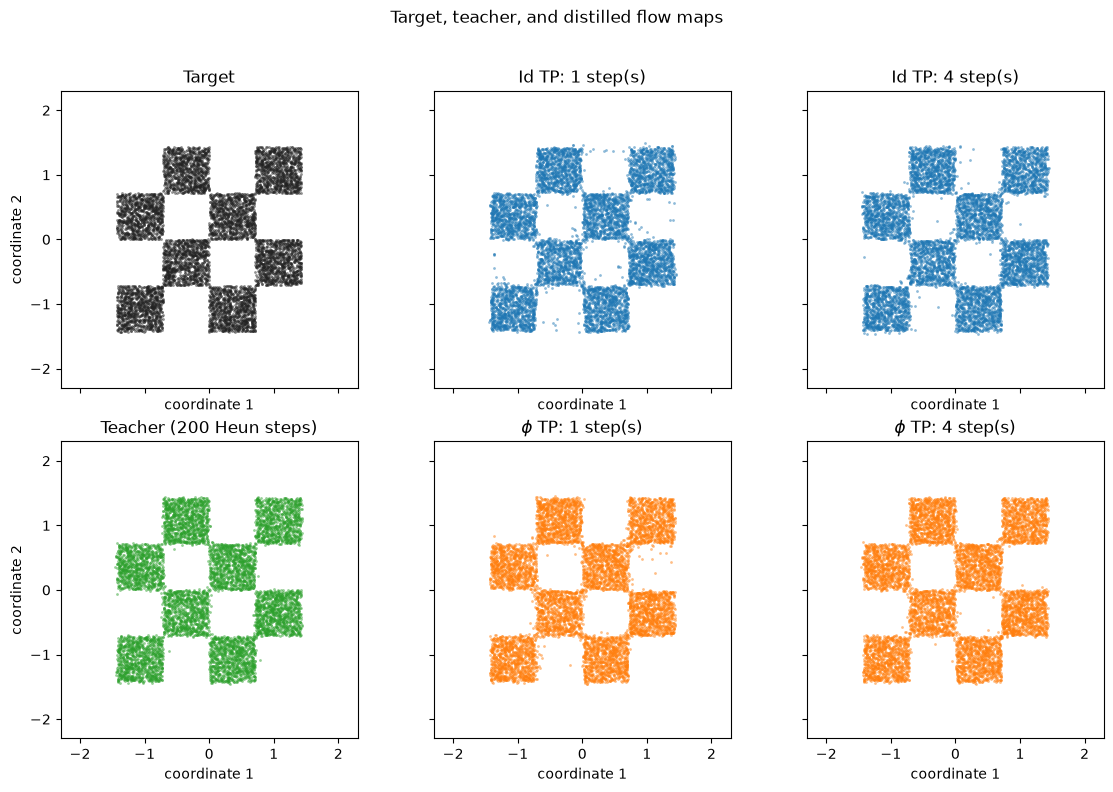}
    \caption{8192 samples from the target distribution (checkerboard, top left, black), the flow matching teacher (bottom left, green), the $\mathrm{Id}$-TP student (baseline, top middle and right) and the $\phi$-TP student (ours, bottom middle and right).}
    \label{fig:toy samples}
\end{figure}

\paragraph{Dataset and training.}
The target is a uniform distribution on the occupied
squares of a two-dimensional checkerboard.
The source distribution is
$\rho_0=\mathcal N(\mathbf 0,\mI)$.

The teacher and students use four-layer MLPs of width
$512$ and SiLU activation function.
Each model is trained for 50,000 optimizer updates with a batch size of 8192. The teacher is optimized using AdamW \citep{loshchilov2019decoupledweightdecayregularization} with learning rate $10^{-3}$, $\beta_1=0.9$, $\beta_2 = 0.999$, weight decay $10^{-4}$, and $\epsilon = 10^{-8}$. Its learning rate follows a cosine schedule down to $10^{-6}$, and its exponential moving average uses decay $0.998$. The students use AdamW with learning rate $10^{-2}$, the same Adam coefficients, weight decay, and numerical stabilization, followed by an exponential learning-rate schedule with multiplicative decay $0.9999$ after each optimizer update. Their exponential moving average
uses decay $0.999$. The maximum gradient norms are $10$ for the teacher and $1$ for the students. EMA
weights are used for validation and evaluation.
We use a constant loss weight $\frac{1}{2}$.
The $\phi$ student uses displacement scale
$\beta-\alpha$.

\paragraph{Evaluation protocol.}
\label{app:toy-metrics}
We evaluate generated samples using a histogram-based forward KL, Gaussian-kernel MMD, and empirical
Wasserstein-2 distance.
KL uses $64{,}000$ generated samples and the known target density. For MMD and Wasserstein-2, we use $8192$ samples per distribution and compare against both target samples and teacher-generated samples, measuring target fidelity and distillation fidelity, respectively.

\paragraph{Histogram-based forward KL.}
We estimate the forward divergence
$D_{\mathrm{KL}}(p\|q)$ using the known checkerboard
density $p$ and a histogram estimate of the generated
density $q$.
A fixed domain containing the target support is
partitioned into $64\times64$ equal-area bins.
For bin center $c_{ij}$, bin area $\Delta A$, and
generated count $n_{ij}$, we set
\[
    \widehat q(c_{ij})
    =
    \frac{n_{ij}}{N\Delta A},
\]
where $N$ is the total number of generated samples,
including those outside the histogram domain.
We compute
\[
    \widehat D_{\mathrm{KL}}^{\,\varepsilon}
    =
    \sum_{i,j:\,p(c_{ij})>0}
    p(c_{ij})\Delta A
    \log\!\left(
        \frac{p(c_{ij})}
        {\widehat q(c_{ij})+\varepsilon}
    \right),
    \qquad \varepsilon=10^{-10}.
\]
Normalization by $N$ retains the penalty associated
with generated mass outside the domain.
The density floor prevents infinite values from empty
target-positive bins.
This statistic approximates the forward KL at the
chosen spatial resolution; its value depends on
binning, sample count, and regularization.
We use identical evaluation settings for all models.

\paragraph{Maximum mean discrepancy.}
We use the Gaussian kernel
\[
    k(\vx,\vy)
    =
    \exp\!\left(
        -\frac{\norm{\vx-\vy}^2}{2\sigma^2}
    \right),
    \qquad \sigma=0.05.
\]
For distributions $\mu$ and $\nu$, the squared
population discrepancy is
\[
    \operatorname{MMD}_k^2(\mu,\nu)
    =
    \mathbb E[k(X,X')]
    +
    \mathbb E[k(Y,Y')]
    -
    2\mathbb E[k(X,Y)],
\]
where $X,X'\sim\mu$ and $Y,Y'\sim\nu$ are independent.
We report the biased estimate
\[
\widehat{\mathrm{MMD}}_{\mathrm b}=\left(
\frac{1}{n^2}\sum_{i,j}k(x_i,x_j)
+
\frac{1}{m^2}\sum_{i,j}k(y_i,y_j)
-
\frac{2}{nm}\sum_{i,j}k(x_i,y_j).\right)^{1/2}.
\]
\paragraph{Empirical Wasserstein distance.}
We compare empirical sample measures using the
quadratic transport cost:
\[
    W_2^2(\widehat\mu,\widehat\nu)
    =
    \min_{\gamma\in\Pi(\widehat\mu,\widehat\nu)}
    \int \norm{\vx-\vy}^2
    \,\mathrm d\gamma(\vx,\vy).
\]
For two equally sized point clouds with $n$ samples, we solve the assignment problem with cost matrix $C_{ij}=\norm{x_i-y_j}^2$ using SciPy's \texttt{linear\_sum\_assignment}. If $\sigma$ is the resulting optimal permutation, we report \[ W_2(\widehat\mu,\widehat\nu) = \left( \frac{1}{n}\sum_{i=1}^n \norm{x_i-y_{\sigma(i)}}^2 \right)^{1/2}. \]

\section{Image Experiments: Implementation and Additional Results}
\label{app:image-experiments}

\subsection{Shared distillation setup}
\label{app:image-shared}

\paragraph{Training inputs and time sampling.}
Both image experiments use unconditional students trained
with the stop-gradient Lagrangian distillation objective.
The teacher and, where applicable, the estimated clock
remain fixed throughout distillation.
Input states are sampled directly from the linear
interpolant: $I_\alpha$ for $\mathrm{Id}$ and
$I_{\phi(\alpha)}$ for $\phi$.

We draw $u,v\sim\mathcal U([0,1])$ independently and set
\[
    s=\min(u,v),\qquad t=\max(u,v).
\]
For $\mathrm{Id}$, the student receives $(s,t)$; for $\phi$,
it receives $(\alpha,\beta)=(\psi(s),\psi(t))$.
Both use the input state $I_s$.
The physical-time pair $(s,t)$ is uniform over the upper
triangle; the transformed pair $(\alpha,\beta)$ generally
is not.
We do not explicitly include diagonal samples.

In preliminary experiments, directly sampling and sorting
uniform student times also improved one- and two-step target
FID over $\mathrm{Id}$.
Sampling uniformly in physical time yielded further improvements
and is used for all image results reported here.

\paragraph{Learned loss weighting.}
We use the learned weighting introduced for diffusion
training in EDM2
\citep{karras2024analyzingimprovingtrainingdynamics}
and adapted to two-time flow-map objectives by
\citet{boffi2025buildconsistencymodellearning}.
Let $\ell_{\alpha,\beta}$ denote the unweighted
per-example squared distillation residual.
A scalar head with parameters $\eta$ predicts
\[
    w_{\alpha,\beta}
    :=
    w_\eta(\alpha,\beta-\alpha).
\]
We jointly optimize the student and weighting head using
\begin{equation}
\label{eq:image-learned-weighting}
    \mathcal L_{\mathrm{weighted}}(\vtheta,\eta)
    =
    \mathbb E
    \left[
        e^{-w_{\alpha,\beta}}\ell_{\alpha,\beta}
        +w_{\alpha,\beta}
    \right].
\end{equation}
The exponential factor rescales the contribution of each
time pair, while the additive term penalizes arbitrarily
suppressing the residual through large positive weights.
We found this weighting important for training stability.
Each student learns its own weighting function, using
the same weighting scheme for $\mathrm{Id}$ and $\phi$.
The weighting head is used only during training and adds
no inference-time network evaluations.

\paragraph{Hyperparameter selection.} 
For each image dataset, we compared candidate learning rates and decay schedules using FID and KID computed from 50,000
generated samples at multiple training checkpoints and
sampling budgets.
Both metrics consistently selected the same learning-rate
and decay configuration across these checkpoints and budgets.
The selected configuration was also the same for
$\mathrm{Id}$ and $\phi$.

\paragraph{Student evaluation.}
We evaluate EMA student weights, using decay $0.9999$
in both image experiments.
Comparisons between $\mathrm{Id}$ and $\phi$ use matched training checkpoints and numbers of student network evaluations.
The main text reports FID \citep{heusel2018ganstrainedtimescaleupdate}; the additional tables report both FID and KID \citep{bińkowski2021demystifyingmmdgans} at multiple training stages. Both statistics are computed using 50,000 generated samples and legacy TensorFlow-compatible Inception-v3 features. For the KID we use a standard cubic polynomial kernel 
\[
k(x,y)=\left(\frac{x^\top y}{d}+1\right)^3,
\] 
where $d$ is the Inception feature dimension. We report the mean unbiased estimate of $\operatorname{MMD}_k^2$ over $100$ subsets of $1000$ samples per distribution.

Each reported configuration uses a single training run.
Preliminary repetitions of selected configurations yielded very similar training curves and negligible differences between evaluation results.

\paragraph{Construction of $\phi$}
We used the parameters $p=2$, $\delta=1$, $\xi=10^{-5}$, $N=100$ as described in \ref{alg:clock-construction}. In practice, we found that the parameter $p$ does not have a major effect on the resulting $\phi$ (for tested values in $[1,4]$). Moreover the numerical floor $\xi$ has negligible effect in all our experiments since the estimate $h_p$ has very high values (in range $10^2$ and above).

\subsection{CIFAR-10}
\label{app:cifar10}

\input{Tables/cifar10-student-final-fid-kid_new}

\paragraph{Data.}
We use the $50{,}000$ training images of CIFAR-10
\citep{krizhevsky2009cifar} at their native
$32\times32$ resolution, without class conditioning.
The source distribution is
$\rho_0=\mathcal N(\mathbf 0,\mI)$.
We add horizontal flips. The images are then mapped to [-1,1].

\paragraph{Architecture.}
The teacher and students use the guided-diffusion U-Net architecture of \citet{dhariwal2021diffusion}, as implemented in TorchCFM \citep{tong2023simulation,tong2024improving}, with base width $128$ and channel multipliers $(1,2,2,2)$. As in \citet{boffi2025flowmap}, the student adapts the time-conditioning pathway to accept two times. 
The corresponding resolution--channel pairs are
\[
    (32^2,128),\quad
    (16^2,256),\quad
    (8^2,256),\quad
    (4^2,256).
\]
The backbone uses two encoder residual blocks and three
skip-connected decoder blocks per resolution, with
self-attention at $16\times16$ and in the bottleneck.
Attention uses four heads with $64$ channels per head.
The models contain approximately $35$M parameters.

The teacher embeds physical time $t$, whereas the student
embeds $\alpha$ and $\beta$ separately, using
$128$-dimensional Fourier features with time-input
scale $1000$.
These features are projected into the residual-block
conditioning stream.

\paragraph{Pretrained teacher.}
We use a publicly released TorchCFM checkpoint
\citep{tong2023simulation,tong2024improving}.
The supplied training configuration uses Adam with
learning rate $2\cdot10^{-4}$, default optimizer
parameters, a batch size of $128$, $5{,}000$ learning-rate
warmup updates, gradient-norm clipping at $1.0$,
EMA decay $0.9999$, and $400{,}000$ training updates.
The exact checkpoint is TorchCFM release v1.0.4,
\texttt{cfm\_cifar10\_weights\_step\_400000.pt}, trained using
Independent Conditional Flow Matching (I-CFM) with independent
source--target coupling and noise $\sigma=0$.
The teacher achieves approximately 50k-FID $3.5$ (see Fig~\ref{fig:cifar10-teacher-fid-kid})

\paragraph{Student optimization.}
Both students use RAdam \citep{liu2019radam},
a global batch size of $196$, and $400{,}000$ optimizer
updates.
The learning rate at optimizer update $k$ is \[ \eta(k)=2\cdot10^{-4}\,0.992^{k/1000}. \]
Both students use the learned weighting from
Appendix~\ref{app:image-shared}.

\paragraph{Evaluation.}
The main-text figure uses the checkpoint at
300k training steps.
Full results across inference budgets and training
steps are reported in Table~\ref{tab: cifar10 full eval}.

\subsection{CelebA64}
\label{app:celeba64}

\input{Tables/celeba64_metric_results_new}

\paragraph{Data and preprocessing.}
We use the official CelebA training partition
\citep{liu2015celeba}, containing $162{,}770$
images.
Validation uses $3{,}072$ fixed images sampled with
seed $0$ from the official validation partition.
The official test partition is not used for training.
Images are center-cropped or padded to $140\times140$,
resized to $64\times64$ using area interpolation, and
converted from uint8 RGB to $[-1,1]$.
We use no horizontal flipping or other online augmentation.

The source distribution is
$\rho_0=\mathcal N(\mathbf 0,0.25\mI)$.
Source and data samples are coupled independently.
Teacher training uses uniformly sampled physical times
and linear interpolation without additional noise.

\paragraph{Architecture.}
We use the NVLabs EDM2 U-Net backbone
\citep{karras2024analyzingimprovingtrainingdynamics}
with base width $128$ and channel multipliers
$(1,2,3,4)$, giving
\[
    (64^2,128),\quad
    (32^2,256),\quad
    (16^2,384),\quad
    (8^2,512).
\]
There are three encoder residual blocks and four
skip-connected decoder blocks per resolution, with
attention at $16\times16$ and $8\times8$.
Teacher dropout is zero; student dropout is $0.1$.
The teacher and student U-Nets contain $124{,}198{,}821$
and $124{,}264{,}357$ trainable parameters, respectively.

The backbone uses magnitude-preserving convolutions,
pixel normalization, magnitude-preserving residual and
skip mixing, normalized attention, and learnable
residual and embedding gains.
An internal constant-one plane is appended to RGB,
so the first convolution receives four channels.
No spatial time-conditioning planes are used.

The teacher uses a deterministic $128$-dimensional
Fourier embedding of physical time,
\[
    z(t)=\sqrt{2}\,[\cos(t\omega),\sin(t\omega)],
\]
with logarithmically spaced frequencies and maximum
period $10^4$.
A magnitude-preserving projection maps this embedding
to a $512$-dimensional conditioning stream.
Students instead concatenate separate $128$-dimensional
embeddings of $\alpha$ and $\beta$ and project the result
to $512$ dimensions.
After magnitude-preserving SiLU, the student embedding
modulates every residual block through the original
conditioning pathway.

\paragraph{Teacher optimization.}
The teacher minimizes the independent conditional
flow-matching loss, averaged over batch, channels,
and spatial dimensions, without learned loss weighting.
We use RAdam with
$(\beta_1,\beta_2)=(0.9,0.999)$ and $\epsilon=10^{-8}$,
a global batch size of $256$, and trained for $350{,}000$ updates.
The learning-rate schedule is
\[
    \eta(k)
    =
    \frac{10^{-2}}{\sqrt{\max(k/35000,1)}}.
\]
Training uses four A100 GPUs with $64$ images per GPU,
BF16 computation, gradient-norm clipping at $1.0$,
and EMA decay $0.9999$.
EDM2 weights are renormalized after every optimizer update.

We freeze the step-$350{,}000$ EMA state as an immutable,
hashed snapshot for distillation.
Its evaluation with $100$ Heun steps and $50{,}000$
generated samples gives approximately FID $2.36$
and KID $1.46\cdot10^{-3}$.

\paragraph{Student optimization and weighting implementation.}
Both students use one RAdam optimizer with separate
parameter groups for the U-Net and weighting head.
The groups share the same learning rate,
$(\beta_1,\beta_2)=(0.9,0.999)$,
$\epsilon=10^{-8}$, and zero weight decay.
Students are trained for $400{,}000$ updates with learning-rate schedule \[ \eta(k)=3\cdot10^{-3}\,0.992^{k/1000}. \]

The global batch size is $256$, with $64$ examples
per GPU on four GPUs.
We use BF16 computation, gradient-norm clipping at $1.0$,
and EMA decay $0.9999$.
EDM2 weights are renormalized after every optimizer
update and every EMA update.
There is no early stopping.

For Equation~\ref{eq:image-learned-weighting},
$\ell_{\alpha,\beta}$ is a sum over all
$3\cdot64\cdot64$ image coordinates.
Weighted losses are then averaged over the batch.
The weighting head embeds $\alpha$ and
$\beta-\alpha$ separately with $128$-dimensional
sinusoidal features, concatenates them, and applies
a zero-initialized linear projection to one scalar.
It contains $257$ parameters in addition to the U-Net.

%% file: Tables/cifar10-student-final-fid-kid_new.tex
\definecolor{bestcell}{HTML}{D9EAD3}
\begin{table}
\begin{tabular}{llrrrrrrrrrr}
\toprule
Training & Model & \multicolumn{5}{c}{FID to target} & \multicolumn{5}{c}{FID to teacher} \\
\cmidrule(lr){3-7} \cmidrule(lr){8-12}
 & & 1 & 2 & 4 & 8 & 20 & 1 & 2 & 4 & 8 & 20 \\
\midrule
150k & Id & 10.09 & 6.31 & 5.54 & 5.49 & 5.65 & 4.44 & 1.83 & \cellcolor{bestcell}\textbf{1.44} & \cellcolor{bestcell}\textbf{1.52} & 1.75 \\
150k & $\phi$ (teacher time) & \cellcolor{bestcell}\textbf{7.94} & \cellcolor{bestcell}\textbf{5.60} & \cellcolor{bestcell}\textbf{5.34} & \cellcolor{bestcell}\textbf{5.45} & \cellcolor{bestcell}\textbf{5.17} & \cellcolor{bestcell}\textbf{2.85} & \cellcolor{bestcell}\textbf{1.49} & 1.44 & 1.69 & \cellcolor{bestcell}\textbf{1.66} \\
\addlinespace
300k & Id & 8.83 & 5.80 & 5.09 & 4.99 & 5.08 & 3.56 & 1.64 & 1.32 & \cellcolor{bestcell}\textbf{1.35} & \cellcolor{bestcell}\textbf{1.48} \\
300k & $\phi$ (teacher time) & \cellcolor{bestcell}\textbf{7.16} & \cellcolor{bestcell}\textbf{5.06} & \cellcolor{bestcell}\textbf{4.88} & \cellcolor{bestcell}\textbf{4.89} & \cellcolor{bestcell}\textbf{4.85} & \cellcolor{bestcell}\textbf{2.45} & \cellcolor{bestcell}\textbf{1.32} & \cellcolor{bestcell}\textbf{1.27} & 1.43 & 1.48 \\
\addlinespace
400k & Id & 8.57 & 5.71 & 4.98 & 4.89 & 5.02 & 3.40 & 1.62 & 1.30 & \cellcolor{bestcell}\textbf{1.31} & 1.45 \\
400k & $\phi$ (teacher time) & \cellcolor{bestcell}\textbf{6.99} & \cellcolor{bestcell}\textbf{5.01} & \cellcolor{bestcell}\textbf{4.80} & \cellcolor{bestcell}\textbf{4.80} & \cellcolor{bestcell}\textbf{4.79} & \cellcolor{bestcell}\textbf{2.38} & \cellcolor{bestcell}\textbf{1.32} & \cellcolor{bestcell}\textbf{1.25} & 1.37 & \cellcolor{bestcell}\textbf{1.45} \\
\bottomrule
\end{tabular}
\begin{tabular}{llrrrrrrrrrr}
\toprule
Training & Model & \multicolumn{5}{c}{KID to target ($\times 10^3$)} & \multicolumn{5}{c}{KID to teacher ($\times 10^3$)} \\
\cmidrule(lr){3-7} \cmidrule(lr){8-12}
 & & 1 & 2 & 4 & 8 & 20 & 1 & 2 & 4 & 8 & 20 \\
\midrule
150k & Id & 6.07 & 3.37 & 2.91 & \cellcolor{bestcell}\textbf{3.04} & 3.24 & 2.43 & 0.61 & \cellcolor{bestcell}\textbf{0.33} & \cellcolor{bestcell}\textbf{0.45} & 0.68 \\
150k & $\phi$ (teacher time) & \cellcolor{bestcell}\textbf{4.38} & \cellcolor{bestcell}\textbf{2.63} & \cellcolor{bestcell}\textbf{2.79} & 3.08 & \cellcolor{bestcell}\textbf{2.73} & \cellcolor{bestcell}\textbf{1.20} & \cellcolor{bestcell}\textbf{0.26} & 0.37 & 0.58 & \cellcolor{bestcell}\textbf{0.45} \\
\addlinespace
300k & Id & 5.43 & 3.16 & 2.71 & 2.75 & 2.90 & 1.91 & 0.52 & 0.28 & \cellcolor{bestcell}\textbf{0.31} & 0.48 \\
300k & $\phi$ (teacher time) & \cellcolor{bestcell}\textbf{3.90} & \cellcolor{bestcell}\textbf{2.37} & \cellcolor{bestcell}\textbf{2.53} & \cellcolor{bestcell}\textbf{2.72} & \cellcolor{bestcell}\textbf{2.73} & \cellcolor{bestcell}\textbf{0.97} & \cellcolor{bestcell}\textbf{0.19} & \cellcolor{bestcell}\textbf{0.23} & 0.40 & \cellcolor{bestcell}\textbf{0.45} \\
\addlinespace
400k & Id & 5.21 & 3.13 & 2.71 & \cellcolor{bestcell}\textbf{2.62} & 2.92 & 1.78 & 0.52 & 0.27 & \cellcolor{bestcell}\textbf{0.28} & 0.45 \\
400k & $\phi$ (teacher time) & \cellcolor{bestcell}\textbf{3.78} & \cellcolor{bestcell}\textbf{2.34} & \cellcolor{bestcell}\textbf{2.45} & 2.66 & \cellcolor{bestcell}\textbf{2.71} & \cellcolor{bestcell}\textbf{0.92} & \cellcolor{bestcell}\textbf{0.20} & \cellcolor{bestcell}\textbf{0.21} & 0.35 & \cellcolor{bestcell}\textbf{0.42} \\
\bottomrule
\end{tabular}

\caption{CIFAR-10 distillation results across training checkpoints and sampling budgets. KID values are multiplied by $10^3$. For each checkpoint, reference distribution, and sampling budget, the lower value across the two clocks is highlighted.}
\label{tab: cifar10 full eval}
\end{table}

%% file: Tables/celeba64_metric_results_new.tex
\definecolor{bestcell}{HTML}{D9EAD3}
\begin{table}[!htb]
\begin{tabular}{llrrrrrrrrrr}
\toprule
Training & Model & \multicolumn{5}{c}{FID to target} & \multicolumn{5}{c}{FID to teacher} \\
\cmidrule(lr){3-7} \cmidrule(lr){8-12}
 & & 1 & 2 & 4 & 8 & 20 & 1 & 2 & 4 & 8 & 20 \\
\midrule
150k & Id & 8.41 & 5.73 & 4.70 & 4.35 & 4.15 & 3.93 & 1.73 & \cellcolor{bestcell}\textbf{1.48} & \cellcolor{bestcell}\textbf{1.76} & \cellcolor{bestcell}\textbf{1.98} \\
150k & $\phi$ (teacher time) & \cellcolor{bestcell}\textbf{6.00} & \cellcolor{bestcell}\textbf{4.18} & \cellcolor{bestcell}\textbf{3.84} & \cellcolor{bestcell}\textbf{4.05} & \cellcolor{bestcell}\textbf{4.03} & \cellcolor{bestcell}\textbf{2.69} & \cellcolor{bestcell}\textbf{1.46} & 1.97 & 2.60 & 2.68 \\
\addlinespace
200k & Id & 7.56 & 5.14 & 4.18 & 3.92 & 3.82 & 3.35 & 1.44 & \cellcolor{bestcell}\textbf{1.25} & \cellcolor{bestcell}\textbf{1.57} & \cellcolor{bestcell}\textbf{1.81} \\
200k & $\phi$ (teacher time) & \cellcolor{bestcell}\textbf{5.38} & \cellcolor{bestcell}\textbf{3.72} & \cellcolor{bestcell}\textbf{3.44} & \cellcolor{bestcell}\textbf{3.63} & \cellcolor{bestcell}\textbf{3.62} & \cellcolor{bestcell}\textbf{2.46} & \cellcolor{bestcell}\textbf{1.39} & 1.89 & 2.45 & 2.52 \\
\addlinespace
250k & Id & 6.98 & 4.78 & 3.93 & 3.74 & 3.66 & 2.93 & 1.26 & \cellcolor{bestcell}\textbf{1.14} & \cellcolor{bestcell}\textbf{1.51} & \cellcolor{bestcell}\textbf{1.75} \\
250k & $\phi$ (teacher time) & \cellcolor{bestcell}\textbf{5.20} & \cellcolor{bestcell}\textbf{3.67} & \cellcolor{bestcell}\textbf{3.37} & \cellcolor{bestcell}\textbf{3.56} & \cellcolor{bestcell}\textbf{3.61} & \cellcolor{bestcell}\textbf{2.13} & \cellcolor{bestcell}\textbf{1.15} & 1.61 & 2.24 & 2.42 \\
\addlinespace
300k & Id & 6.56 & 4.54 & 3.75 & 3.60 & 3.58 & 2.60 & \cellcolor{bestcell}\textbf{1.11} & \cellcolor{bestcell}\textbf{1.06} & \cellcolor{bestcell}\textbf{1.46} & \cellcolor{bestcell}\textbf{1.74} \\
300k & $\phi$ (teacher time) & \cellcolor{bestcell}\textbf{4.72} & \cellcolor{bestcell}\textbf{3.29} & \cellcolor{bestcell}\textbf{3.08} & \cellcolor{bestcell}\textbf{3.33} & \cellcolor{bestcell}\textbf{3.38} & \cellcolor{bestcell}\textbf{2.04} & 1.15 & 1.65 & 2.29 & 2.43 \\
\addlinespace
350k & Id & 6.35 & 4.47 & 3.74 & 3.59 & 3.56 & 2.43 & \cellcolor{bestcell}\textbf{1.07} & \cellcolor{bestcell}\textbf{1.05} & \cellcolor{bestcell}\textbf{1.46} & \cellcolor{bestcell}\textbf{1.75} \\
350k & $\phi$ (teacher time) & \cellcolor{bestcell}\textbf{4.68} & \cellcolor{bestcell}\textbf{3.30} & \cellcolor{bestcell}\textbf{3.07} & \cellcolor{bestcell}\textbf{3.29} & \cellcolor{bestcell}\textbf{3.33} & \cellcolor{bestcell}\textbf{1.95} & 1.09 & 1.53 & 2.16 & 2.30 \\
\addlinespace
400k & Id & 6.19 & 4.35 & 3.63 & 3.51 & 3.51 & 2.34 & \cellcolor{bestcell}\textbf{1.02} & \cellcolor{bestcell}\textbf{1.01} & \cellcolor{bestcell}\textbf{1.44} & \cellcolor{bestcell}\textbf{1.74} \\
400k & $\phi$ (teacher time) & \cellcolor{bestcell}\textbf{4.64} & \cellcolor{bestcell}\textbf{3.36} & \cellcolor{bestcell}\textbf{3.12} & \cellcolor{bestcell}\textbf{3.28} & \cellcolor{bestcell}\textbf{3.26} & \cellcolor{bestcell}\textbf{1.81} & 1.05 & 1.49 & 2.11 & 2.26 \\
\bottomrule
\end{tabular}
\begin{tabular}{llrrrrrrrrrr}
\toprule
Training & Model & \multicolumn{5}{c}{KID to target ($\times 10^3$)} & \multicolumn{5}{c}{KID to teacher ($\times 10^3$)} \\
\cmidrule(lr){3-7} \cmidrule(lr){8-12}
 & & 1 & 2 & 4 & 8 & 20 & 1 & 2 & 4 & 8 & 20 \\
\midrule
150k & Id & 5.43 & 3.55 & 3.06 & \cellcolor{bestcell}\textbf{3.01} & \cellcolor{bestcell}\textbf{2.91} & 2.72 & \cellcolor{bestcell}\textbf{1.05} & \cellcolor{bestcell}\textbf{1.17} & \cellcolor{bestcell}\textbf{1.69} & \cellcolor{bestcell}\textbf{1.92} \\
150k & $\phi$ (teacher time) & \cellcolor{bestcell}\textbf{3.79} & \cellcolor{bestcell}\textbf{2.65} & \cellcolor{bestcell}\textbf{2.76} & 3.08 & 3.06 & \cellcolor{bestcell}\textbf{1.97} & 1.14 & 2.03 & 2.73 & 2.71 \\
\addlinespace
200k & Id & 4.84 & 3.13 & 2.68 & \cellcolor{bestcell}\textbf{2.71} & \cellcolor{bestcell}\textbf{2.72} & 2.32 & \cellcolor{bestcell}\textbf{0.86} & \cellcolor{bestcell}\textbf{1.01} & \cellcolor{bestcell}\textbf{1.54} & \cellcolor{bestcell}\textbf{1.79} \\
200k & $\phi$ (teacher time) & \cellcolor{bestcell}\textbf{3.43} & \cellcolor{bestcell}\textbf{2.38} & \cellcolor{bestcell}\textbf{2.49} & 2.77 & 2.74 & \cellcolor{bestcell}\textbf{1.91} & 1.17 & 1.99 & 2.61 & 2.57 \\
\addlinespace
250k & Id & 4.43 & 2.93 & 2.53 & \cellcolor{bestcell}\textbf{2.60} & \cellcolor{bestcell}\textbf{2.60} & 1.99 & \cellcolor{bestcell}\textbf{0.75} & \cellcolor{bestcell}\textbf{0.93} & \cellcolor{bestcell}\textbf{1.48} & \cellcolor{bestcell}\textbf{1.73} \\
250k & $\phi$ (teacher time) & \cellcolor{bestcell}\textbf{3.17} & \cellcolor{bestcell}\textbf{2.21} & \cellcolor{bestcell}\textbf{2.31} & 2.67 & 2.73 & \cellcolor{bestcell}\textbf{1.50} & 0.83 & 1.60 & 2.34 & 2.44 \\
\addlinespace
300k & Id & 4.15 & 2.78 & 2.43 & 2.53 & 2.59 & 1.74 & \cellcolor{bestcell}\textbf{0.64} & \cellcolor{bestcell}\textbf{0.86} & \cellcolor{bestcell}\textbf{1.44} & \cellcolor{bestcell}\textbf{1.74} \\
300k & $\phi$ (teacher time) & \cellcolor{bestcell}\textbf{2.92} & \cellcolor{bestcell}\textbf{2.03} & \cellcolor{bestcell}\textbf{2.16} & \cellcolor{bestcell}\textbf{2.52} & \cellcolor{bestcell}\textbf{2.57} & \cellcolor{bestcell}\textbf{1.50} & 0.88 & 1.63 & 2.37 & 2.44 \\
\addlinespace
350k & Id & 4.01 & 2.76 & 2.43 & 2.54 & 2.59 & 1.61 & \cellcolor{bestcell}\textbf{0.62} & \cellcolor{bestcell}\textbf{0.86} & \cellcolor{bestcell}\textbf{1.44} & \cellcolor{bestcell}\textbf{1.75} \\
350k & $\phi$ (teacher time) & \cellcolor{bestcell}\textbf{2.84} & \cellcolor{bestcell}\textbf{2.01} & \cellcolor{bestcell}\textbf{2.10} & \cellcolor{bestcell}\textbf{2.46} & \cellcolor{bestcell}\textbf{2.51} & \cellcolor{bestcell}\textbf{1.37} & 0.79 & 1.50 & 2.22 & 2.29 \\
\addlinespace
400k & Id & 3.89 & 2.66 & 2.33 & 2.45 & 2.54 & 1.56 & \cellcolor{bestcell}\textbf{0.58} & \cellcolor{bestcell}\textbf{0.82} & \cellcolor{bestcell}\textbf{1.42} & \cellcolor{bestcell}\textbf{1.74} \\
400k & $\phi$ (teacher time) & \cellcolor{bestcell}\textbf{2.80} & \cellcolor{bestcell}\textbf{2.04} & \cellcolor{bestcell}\textbf{2.10} & \cellcolor{bestcell}\textbf{2.41} & \cellcolor{bestcell}\textbf{2.43} & \cellcolor{bestcell}\textbf{1.31} & 0.80 & 1.49 & 2.18 & 2.26 \\
\bottomrule
\end{tabular}
\caption{CelebA64 distillation results across training checkpoints and sampling budgets. KID values are multiplied by $10^3$. For each checkpoint, reference distribution, and sampling budget, the lower value across the two clocks is highlighted.}
\label{tab: celeba full eval}
\end{table}

%% file: iclr2027_conference.bib
@inproceedings{sohldickstein2015deep,
  title     = {Deep Unsupervised Learning using Nonequilibrium Thermodynamics},
  author    = {Sohl-Dickstein, Jascha and Weiss, Eric and
               Maheswaranathan, Niru and Ganguli, Surya},
  booktitle = {Proceedings of the 32nd International Conference on Machine Learning},
  series    = {Proceedings of Machine Learning Research},
  volume    = {37},
  pages     = {2256--2265},
  year      = {2015},
  publisher = {PMLR},
  url       = {https://proceedings.mlr.press/v37/sohl-dickstein15.html}
}

@inproceedings{ho2020denoising,
  title     = {Denoising Diffusion Probabilistic Models},
  author    = {Ho, Jonathan and Jain, Ajay and Abbeel, Pieter},
  booktitle = {Advances in Neural Information Processing Systems},
  volume    = {33},
  pages     = {6840--6851},
  year      = {2020},
  url       = {https://proceedings.neurips.cc/paper/2020/hash/4c5bcfec8584af0d967f1ab10179ca4b-Abstract.html}
}

@inproceedings{song2021score,
  title     = {Score-Based Generative Modeling through Stochastic Differential Equations},
  author    = {Song, Yang and Sohl-Dickstein, Jascha and Kingma, Diederik P.
               and Kumar, Abhishek and Ermon, Stefano and Poole, Ben},
  booktitle = {International Conference on Learning Representations},
  year      = {2021},
  url       = {https://openreview.net/forum?id=PxTIG12RRHS}
}

@inproceedings{dhariwal2021diffusion,
  title     = {Diffusion Models Beat {GANs} on Image Synthesis},
  author    = {Dhariwal, Prafulla and Nichol, Alexander},
  booktitle = {Advances in Neural Information Processing Systems},
  volume    = {34},
  pages     = {8780--8794},
  year      = {2021},
  url       = {https://proceedings.neurips.cc/paper/2021/hash/49ad23d1ec9fa4bd8d77d02681df5cfa-Abstract.html}
}

@inproceedings{rombach2022highresolution,
  title     = {High-Resolution Image Synthesis with Latent Diffusion Models},
  author    = {Rombach, Robin and Blattmann, Andreas and Lorenz, Dominik
               and Esser, Patrick and Ommer, Bj{\"o}rn},
  booktitle = {Proceedings of the IEEE/CVF Conference on Computer Vision
               and Pattern Recognition},
  pages     = {10684--10695},
  year      = {2022},
  url       = {https://openaccess.thecvf.com/content/CVPR2022/html/Rombach_High-Resolution_Image_Synthesis_With_Latent_Diffusion_Models_CVPR_2022_paper.html}
}

@inproceedings{song2021denoising,
  title     = {Denoising Diffusion Implicit Models},
  author    = {Song, Jiaming and Meng, Chenlin and Ermon, Stefano},
  booktitle = {International Conference on Learning Representations},
  year      = {2021},
  url       = {https://openreview.net/forum?id=St1giarCHLP}
}

@inproceedings{lu2022dpmsolver,
  title     = {{DPM-Solver}: A Fast {ODE} Solver for Diffusion Probabilistic
               Model Sampling in Around 10 Steps},
  author    = {Lu, Cheng and Zhou, Yuhao and Bao, Fan and Chen, Jianfei
               and Li, Chongxuan and Zhu, Jun},
  booktitle = {Advances in Neural Information Processing Systems},
  volume    = {35},
  pages     = {5775--5787},
  year      = {2022},
  url       = {https://proceedings.neurips.cc/paper_files/paper/2022/hash/260a14acce2a89dad36adc8eefe7c59e-Abstract-Conference.html}
}

@inproceedings{salimans2022progressive,
  title     = {Progressive Distillation for Fast Sampling of Diffusion Models},
  author    = {Salimans, Tim and Ho, Jonathan},
  booktitle = {International Conference on Learning Representations},
  year      = {2022},
  url       = {https://openreview.net/forum?id=TIdIXIpzhoI}
}

@inproceedings{song2023consistency,
  title     = {Consistency Models},
  author    = {Song, Yang and Dhariwal, Prafulla and Chen, Mark
               and Sutskever, Ilya},
  booktitle = {Proceedings of the 40th International Conference on Machine Learning},
  series    = {Proceedings of Machine Learning Research},
  volume    = {202},
  pages     = {32211--32252},
  year      = {2023},
  publisher = {PMLR},
  url       = {https://proceedings.mlr.press/v202/song23a.html}
}

@inproceedings{lipman2023flowmatching,
  title     = {Flow Matching for Generative Modeling},
  author    = {Lipman, Yaron and Chen, Ricky T. Q. and Ben-Hamu, Heli
               and Nickel, Maximilian and Le, Matthew},
  booktitle = {International Conference on Learning Representations},
  year      = {2023},
  url       = {https://openreview.net/forum?id=PqvMRDCJT9t}
}

@inproceedings{liu2023rectifiedflow,
  title     = {Flow Straight and Fast: Learning to Generate and Transfer
               Data with Rectified Flow},
  author    = {Liu, Xingchao and Gong, Chengyue and Liu, Qiang},
  booktitle = {International Conference on Learning Representations},
  year      = {2023},
  url       = {https://openreview.net/forum?id=XVjTT1nw5z}
}

@inproceedings{albergo2023stochasticinterpolants,
  title     = {Building Normalizing Flows with Stochastic Interpolants},
  author    = {Albergo, Michael S. and Vanden-Eijnden, Eric},
  booktitle = {International Conference on Learning Representations},
  year      = {2023},
  url       = {https://openreview.net/forum?id=li7qeBbCR1t}
}

@inproceedings{kim2024consistencytrajectory,
  title     = {Consistency Trajectory Models: Learning Probability
               Flow {ODE} Trajectory of Diffusion},
  author    = {Kim, Dongjun and Lai, Chieh-Hsin and Liao, Wei-Hsiang
               and Murata, Naoki and Takida, Yuhta and Uesaka, Toshimitsu
               and He, Yutong and Mitsufuji, Yuki and Ermon, Stefano},
  booktitle = {International Conference on Learning Representations},
  year      = {2024},
  url       = {https://openreview.net/forum?id=ymjI8feDTD}
}

@article{boffi2025flowmap,
  title   = {Flow Map Matching with Stochastic Interpolants:
             A Mathematical Framework for Consistency Models},
  author  = {Boffi, Nicholas M. and Albergo, Michael S.
             and Vanden-Eijnden, Eric},
  journal = {Transactions on Machine Learning Research},
  year    = {2025},
  url     = {https://openreview.net/forum?id=cqDH0e6ak2}
}

@inproceedings{frans2025shortcut,
  title     = {One Step Diffusion via Shortcut Models},
  author    = {Frans, Kevin and Hafner, Danijar and Levine, Sergey
               and Abbeel, Pieter},
  booktitle = {International Conference on Learning Representations},
  year      = {2025},
  url       = {https://openreview.net/forum?id=OlzB6LnXcS}
}

@inproceedings{geng2025meanflow,
  title     = {Mean Flows for One-step Generative Modeling},
  author    = {Geng, Zhengyang and Deng, Mingyang and Bai, Xingjian
               and Kolter, J. Zico and He, Kaiming},
  booktitle = {Advances in Neural Information Processing Systems},
  volume    = {38},
  year      = {2025},
  url       = {https://openreview.net/forum?id=uWj4s7rMnR}
}

@article{tong2024improving,
title={Improving and generalizing flow-based generative models with minibatch optimal transport},
author={Alexander Tong and Kilian FATRAS and Nikolay Malkin and Guillaume Huguet and Yanlei Zhang and Jarrid Rector-Brooks and Guy Wolf and Yoshua Bengio},
journal={Transactions on Machine Learning Research},
issn={2835-8856},
year={2024},
url={https://openreview.net/forum?id=CD9Snc73AW},
note={Expert Certification}
}

@article{tong2023simulation,
   title={Simulation-Free Schr{\"o}dinger Bridges via Score and Flow Matching},
   author={Tong, Alexander and Malkin, Nikolay and Fatras, Kilian and Atanackovic, Lazar and Zhang, Yanlei and Huguet, Guillaume and Wolf, Guy and Bengio, Yoshua},
   year={2023},
   journal={arXiv preprint 2307.03672}
}

@misc{tian2026curvatureadaptive,
      title={Curvature-Adaptive Consistency Flow Matching: Autonomous Trajectory Optimization via Reinforcement Learning}, 
      author={Songtao Tian and Guhan Chen and Bohan Li and Jingyi Ma and Zixiong Yu},
      year={2026},
      eprint={2606.22394},
      archivePrefix={arXiv},
      primaryClass={cs.CV},
      url={https://arxiv.org}, 
}

@inproceedings{liu2019radam,
  title={On the Variance of the Adaptive Learning Rate and Beyond},
  author={Liu, Liyuan and Jiang, Haoming and He, Pengcheng and Chen, Weizhu and Liu, Xiaodong and Gao, Jianfeng and Han, Jiawei},
  booktitle={Proceedings of the Eighth International Conference on Learning Representations (ICLR 2020)},
  year={2020}
}

@misc{karras2024analyzingimprovingtrainingdynamics,
      title={Analyzing and Improving the Training Dynamics of Diffusion Models}, 
      author={Tero Karras and Miika Aittala and Jaakko Lehtinen and Janne Hellsten and Timo Aila and Samuli Laine},
      year={2024},
      eprint={2312.02696},
      archivePrefix={arXiv},
      primaryClass={cs.CV},
      url={https://arxiv.org/abs/2312.02696}, 
}

@misc{boffi2025buildconsistencymodellearning,
      title={How to build a consistency model: Learning flow maps via self-distillation}, 
      author={Nicholas M. Boffi and Michael S. Albergo and Eric Vanden-Eijnden},
      year={2025},
      eprint={2505.18825},
      archivePrefix={arXiv},
      primaryClass={cs.LG},
      url={https://arxiv.org/abs/2505.18825}, 
}

@InProceedings{lee2023minimizing,
  title = 	 {Minimizing Trajectory Curvature of {ODE}-based Generative Models},
  author =       {Lee, Sangyun and Kim, Beomsu and Ye, Jong Chul},
  booktitle = 	 {Proceedings of the 40th International Conference on Machine Learning},
  pages = 	 {18957--18973},
  year = 	 {2023},
  editor = 	 {Krause, Andreas and Brunskill, Emma and Cho, Kyunghyun and Engelhardt, Barbara and Sabato, Sivan and Scarlett, Jonathan},
  volume = 	 {202},
  series = 	 {Proceedings of Machine Learning Research},
  month = 	 {23--29 Jul},
  publisher =    {PMLR},
  url = 	 {https://proceedings.mlr.press/v202/lee23j.html},
}

@misc{sabour2024alignstepsoptimizingsampling,
      title={Align Your Steps: Optimizing Sampling Schedules in Diffusion Models}, 
      author={Amirmojtaba Sabour and Sanja Fidler and Karsten Kreis},
      year={2024},
      eprint={2404.14507},
      archivePrefix={arXiv},
      primaryClass={cs.CV},
      url={https://arxiv.org/abs/2404.14507}, 
}

@misc{gupta2026sharpenflowsharpnessawaresampling,
      title={Sharpen Your Flow: Sharpness-Aware Sampling for Flow Matching}, 
      author={Aditi Gupta and Soon Hoe Lim and Annan Yu and N. Benjamin Erichson},
      year={2026},
      eprint={2605.11547},
      archivePrefix={arXiv},
      primaryClass={cs.LG},
      url={https://arxiv.org/abs/2605.11547}, 
}

@misc{chen2024trajectoryregularityodebaseddiffusion,
      title={On the Trajectory Regularity of ODE-based Diffusion Sampling}, 
      author={Defang Chen and Zhenyu Zhou and Can Wang and Chunhua Shen and Siwei Lyu},
      year={2024},
      eprint={2405.11326},
      archivePrefix={arXiv},
      primaryClass={cs.LG},
      url={https://arxiv.org/abs/2405.11326}, 
}

@misc{khan2026isokineticflowmatchingpathwise,
      title={Isokinetic Flow Matching for Pathwise Straightening of Generative Flows}, 
      author={Tauhid Khan},
      year={2026},
      eprint={2604.04491},
      archivePrefix={arXiv},
      primaryClass={cs.LG},
      url={https://arxiv.org/abs/2604.04491}, 
}

@inproceedings{sabour2025alignyourflow,
  title     = {Align Your Flow:
               Scaling Continuous-Time Flow Map Distillation},
  author    = {Sabour, Amirmojtaba and Fidler, Sanja and Kreis, Karsten},
  booktitle = {Advances in Neural Information Processing Systems},
  volume    = {38},
  year      = {2025},
  url       = {https://arxiv.org/abs/2506.14603}
}

@misc{shaul2026paralleldecoding,
  title         = {Parallel Decoding Distillation
                   for Fast Image and Video Generation},
  author        = {Shaul, Neta and Liu, Chao
                   and Vahdat, Arash and Berner, Julius},
  year          = {2026},
  eprint        = {2607.26004},
  archivePrefix = {arXiv},
  url           = {https://arxiv.org/abs/2607.26004}
}

@misc{zhao2026meanflowdistillation,
  title         = {Mean Flow Distillation:
                   Robust and Stable Distillation for Flow Matching Models},
  author        = {Zhao, An and Zhang, Shengyuan and Sun, Zhongjian
                   and Zhou, Yixiang and Li, Zejian and Yang, Ling
                   and Chen, Tianrun and Sun, Lingyun},
  year          = {2026},
  eprint        = {2606.11155},
  archivePrefix = {arXiv},
  url           = {https://arxiv.org/abs/2606.11155}
}

@inproceedings{stancevic2025entropic,
  title     = {Entropic Time Schedulers for Generative Diffusion Models},
  author    = {Stan{\v c}evi{\'c}, Dejan and Handke, Florian
               and Ambrogioni, Luca},
  booktitle = {Advances in Neural Information Processing Systems},
  volume    = {38},
  year      = {2025},
  url       = {https://arxiv.org/abs/2504.13612}
}

@misc{issachar2026complexitybalanced,
  title         = {Complexity-Balanced Diffusion Splitting},
  author        = {Issachar, Noam and Lischinski, Dani and Fattal, Raanan},
  year          = {2026},
  eprint        = {2606.06477},
  archivePrefix = {arXiv},
  url           = {https://arxiv.org/abs/2606.06477}
}

@inproceedings{tsimpos2025optimal,
  title     = {Optimal Scheduling of Dynamic Transport},
  author    = {Tsimpos, Panos and Ren, Zhi
               and Zech, Jakob and Marzouk, Youssef},
  booktitle = {Proceedings of the Thirty-Eighth Conference
               on Learning Theory},
  series    = {Proceedings of Machine Learning Research},
  volume    = {291},
  pages     = {5441--5505},
  publisher = {PMLR},
  year      = {2025},
  url       = {https://proceedings.mlr.press/v291/tsimpos25a.html}
}

@misc{chen2025lipschitz,
  title         = {Lipschitz-Guided Design of Interpolation Schedules
                   in Generative Models},
  author        = {Chen, Yifan and Vanden-Eijnden, Eric and Xu, Jiawei},
  year          = {2025},
  eprint        = {2509.01629},
  archivePrefix = {arXiv},
  url           = {https://arxiv.org/abs/2509.01629}
}

@misc{bondar2026velocityscheduled,
  title         = {Velocity Scheduled Flow Matching},
  author        = {Bondar, Vitalii},
  year          = {2026},
  eprint        = {2607.11442},
  archivePrefix = {arXiv},
  url           = {https://arxiv.org/abs/2607.11442}
}

@misc{tania2026difficultycalibrated,
  title         = {Difficulty-Calibrated Interpolation Paths
                   for Conditional Flow Matching},
  author        = {Tania, Airin Akter and Khan, Md Raihan},
  year          = {2026},
  eprint        = {2608.21286},
  archivePrefix = {arXiv},
  url           = {https://arxiv.org/abs/2608.21286}
}

@misc{lee2026flowmaplanguage,
  title         = {Flow Map Language Models:
                   One-step Language Modeling via Continuous Denoising},
  author        = {Lee, Chanhyuk and Yoo, Jaehoon and Agarwal, Manan
                   and Shah, Sheel and Huang, Jerry
                   and Raghunathan, Aditi and Hong, Seunghoon
                   and Boffi, Nicholas M. and Kim, Jinwoo},
  year          = {2026},
  eprint        = {2602.16813},
  archivePrefix = {arXiv},
  url           = {https://arxiv.org/abs/2602.16813}
}

@article{doi:10.1137/0905021,
author = {Fritsch, F. N. and Butland, J.},
title = {A Method for Constructing Local Monotone Piecewise Cubic Interpolants},
journal = {SIAM Journal on Scientific and Statistical Computing},
volume = {5},
number = {2},
pages = {300-304},
year = {1984},
doi = {10.1137/0905021},
URL = {https://doi.org/10.1137/0905021},
eprint = {https://doi.org/10.1137/0905021},
}

@techreport{krizhevsky2009cifar,
  title={Learning multiple layers of features from tiny images},
  author={Krizhevsky, Alex and Hinton, Geoffrey and others},
  year={2009},
  institution={Technical report, University of Toronto}
}

@inproceedings{liu2015celeba,
  title     = {Deep Learning Face Attributes in the Wild},
  author    = {Liu, Ziwei and Luo, Ping and Wang, Xiaogang and Tang, Xiaoou},
  booktitle = {Proceedings of International Conference on Computer Vision (ICCV)},
  month     = {December},
  year      = {2015}
}

@misc{loshchilov2019decoupledweightdecayregularization,
      title={Decoupled Weight Decay Regularization}, 
      author={Ilya Loshchilov and Frank Hutter},
      year={2019},
      eprint={1711.05101},
      archivePrefix={arXiv},
      primaryClass={cs.LG},
      url={https://arxiv.org/abs/1711.05101}, 
}

@misc{bińkowski2021demystifyingmmdgans,
      title={Demystifying MMD GANs}, 
      author={Mikołaj Bińkowski and Danica J. Sutherland and Michael Arbel and Arthur Gretton},
      year={2021},
      eprint={1801.01401},
      archivePrefix={arXiv},
      primaryClass={stat.ML},
      url={https://arxiv.org/abs/1801.01401}, 
}

@misc{heusel2018ganstrainedtimescaleupdate,
      title={GANs Trained by a Two Time-Scale Update Rule Converge to a Local Nash Equilibrium}, 
      author={Martin Heusel and Hubert Ramsauer and Thomas Unterthiner and Bernhard Nessler and Sepp Hochreiter},
      year={2018},
      eprint={1706.08500},
      archivePrefix={arXiv},
      primaryClass={cs.LG},
      url={https://arxiv.org/abs/1706.08500}, 
}
